\documentclass{article}




\usepackage[final]{neurips_2025}


\usepackage[utf8]{inputenc} 
\usepackage[T1]{fontenc}    
\usepackage{hyperref}       
\usepackage{url}            
\usepackage{booktabs}       
\usepackage{amsfonts}       
\usepackage{nicefrac}       
\usepackage{microtype}      
\usepackage{arydshln}

\usepackage[dvipsnames,svgnames,x11names]{xcolor}

\definecolor{lightblue}{HTML}{2970CC}
\definecolor{lightpurple}{HTML}{673147}
\definecolor{ForestGreen}{HTML}{FF5733}

\usepackage{sidecap}
\usepackage{enumerate}
\usepackage{graphicx}
\usepackage{varwidth}
\usepackage{mathrsfs}
\usepackage{mathtools}
\usepackage[font=small,labelfont=bf]{caption}
\usepackage{subcaption}
\usepackage{overpic}
\usepackage{multirow}
\bibpunct{(}{)}{;}{a}{,}{,}
\usepackage{amsmath}
\usepackage{amsthm}
\usepackage{amssymb}
\usepackage{colonequals}
\usepackage{etoolbox}
\usepackage{enumitem}
\usepackage{duckuments}

\usepackage[ruled,vlined]{algorithm2e}

\usepackage{thmtools}
\usepackage{url}
\usepackage[capitalise,noabbrev]{cleveref}
\crefformat{equation}{(#2#1#3)}
\crefrangeformat{equation}{(#3#1#4) to (#5#2#6)}
\crefmultiformat{equation}{(#2#1#3)}{ and (#2#1#3)}{, (#2#1#3)}{ and (#2#1#3)}

\renewcommand{\ge}{\geqslant}
\renewcommand{\geq}{\geqslant}



\newcommand{\R}{\ensuremath{\mathbb{R}}}

\newcommand{\N}{\ensuremath{\mathbb{N}}}

\newcommand{\E}{\mathbb{E}}

\newcommand{\calH}{\mathcal{H}}





\usepackage{mdframed}
\newmdenv[
  linecolor=pink!60!purple,
  backgroundcolor=pink!30,
  linewidth=0pt,
  roundcorner=2pt,
  skipabove=3pt,
  skipbelow=1pt,
  innerleftmargin=3pt,
  innerrightmargin=3pt,
  innertopmargin=3pt,
  innerbottommargin=3pt
]{pinkbox}

\newmdenv[
  linecolor=blue!60!cyan,
  backgroundcolor=blue!10,
  linewidth=0pt,
  roundcorner=2pt,
  skipabove=3pt,
  skipbelow=1pt,
  innerleftmargin=3pt,
  innerrightmargin=3pt,
  innertopmargin=3pt,
  innerbottommargin=3pt
]{bluebox}

\newmdenv[
  linecolor=green!60!teal,
  backgroundcolor=green!15,
  linewidth=0pt,
  roundcorner=2pt,
  skipabove=3pt,
  skipbelow=1pt,
  innerleftmargin=3pt,
  innerrightmargin=3pt,
  innertopmargin=3pt,
  innerbottommargin=3pt
]{greenbox}

\title{Multitask Learning  with\\ Stochastic Interpolants}

%

\author{%
  Hugo ~Negrel\thanks{These authors contributed equally to this work.} \\
  Capital Fund Management \\
  23 Rue de l'Université, 75007 Paris \\
  \texttt{hugonegrel13@gmail.fr} \\
  \And
  Florentin ~Coeurdoux\footnotemark[1] \\
  Capital Fund Management \\
  23 Rue de l'Université, 75007 Paris \\
  \texttt{florentin.coeurdoux@cfm.com} \\
  \AND
  Michael S ~Albergo \\
  Society of Fellows, Harvard University \\
  \texttt{malbergo@fas.harvard.edu} \\
  \And
  Eric ~Vanden-Eijnden \\
  Machine Learning Lab \\
  Capital Fund Management \\
  23 Rue de l'Université, 75007 Paris \\
  \texttt{eric.vanden-eijnden@cfm.com} \thanks{also at: Courant Institute of Mathematical Sciences,
  New York University,
  New York, NY 10012, USA,
  \texttt{eve2@cims.nyu.edu}} \\
}

\begin{document}


\maketitle

\begin{abstract}
  We propose a framework for learning maps between probability distributions that broadly generalizes the time dynamics of flow and diffusion models. To enable this, we generalize stochastic interpolants by replacing the scalar time variable with vectors, matrices, or linear operators, allowing us to bridge probability distributions across multiple dimensional spaces. This approach enables the construction of versatile generative models capable of fulfilling multiple tasks without task-specific training. Our operator-based interpolants not only provide a unifying theoretical perspective for existing generative models but also extend their capabilities. Through numerical experiments, we demonstrate the zero-shot efficacy of our method on conditional generation and inpainting, fine-tuning and posterior sampling, and multiscale modeling, suggesting its potential as a generic task-agnostic alternative to specialized models. A website\footnote{https://multitaskstochasticinterpolant.github.io/} and the code of numerical experiments are available.\footnote{https://github.com/multitaskstochasticinterpolant/Code-Multitask-Stochastic-Interpolants}
\end{abstract}


\section{Introduction}
\label{sec:intro}

Recent years have witnessed remarkable advances in generative modeling, with transport-based approaches such as normalizing flows~\citep{lipman2022flow, albergo2022building, liu2022flow} and diffusion models~\citep{ho2020denoising, song_generative_2020, debortoli2021diffusion, albergo2023stochastic} emerging as state-of-the-art techniques across various application domains \citep{rombach2022high, maze2023diffusion, alverson2024generative}. These methods have revolutionized our ability to generate high-quality images, text, and other complex data types by viewing these data as samples from an unknown target distribution and learning to transform simple (e.g., noise) distributions into this target. This transformation is effectively achieved via transport of the samples by a flow or diffusion process with a drift (or score) parameterized by neural networks and estimated via simulation-free quadratic regression, enabling highly efficient training.

Despite their impressive performance, these generative frameworks face a fundamental limitation: they are typically designed and trained for specific, predetermined tasks, with the generative objective specified before training. For example, a diffusion model trained to generate images cannot easily be repurposed to perform inpainting or other editing tasks without substantial modification or retraining. While some flexibility can be achieved through conditioning variables or prompting, these approaches remain constrained within narrowly defined operational boundaries established beforehand. Recent attempts at multitask generation using approximated guidance strategies \citep{chung2023diffusion,song2022solving,wang2024dmplug} have shown promise, but rely on uncontrolled approximations that limit their theoretical guarantees and can lead to unpredictable results. These methods typically operate within a predefined space of capabilities and lack the flexibility to adapt to novel tasks without retraining, often requiring domain-specific architectural modifications or specialized training procedures that further limit their versatility.

In this paper, we introduce a novel framework for training truly multi-task generative models based on a generalized formulation of stochastic interpolants. Our key insight is to replace the scalar time variable traditionally used in transport-based models with a linear operator. These \emph{operator-based interpolants} enable interpolation between random variables across multiple dimensional planes or setups, providing a unified mathematical formulation that treats various generative tasks as different ways of traversing the same underlying space, rather than as separate problems requiring distinct models. This dramatically expands the space of possible tasks that a single model can perform.

Our \textbf{main contributions} include theoretical advances that establish a framework for multiple generative applications:
\begin{itemize}[leftmargin=0.15in]
\item We extend traditional scalar interpolation in dynamical generative models to higher-dimensional structures, developing a unified mathematical formulation of \textit{\textbf{operator-based interpolants}} that treats various generative tasks as different ways of traversing the same underlying space. This opens up fundamentally new ways of seeing how generative models can be structured to handle multiple objectives simultaneously.

\item We show how this framework enables generative models with \textit{\textbf{continual self-supervision}} over a wide purview of generative tasks, making possible: \textbf{universal inpainting models} that work with arbitrary masks, \textbf{multichannel data denoisers} with operators in the Fourier domain, \textbf{posterior sampling} with quadratic rewards, and \textbf{test-time dynamical optimization} with rewards and interactive user feedback, all with one pretrained model.

\item We demonstrate these various tools on image-infilling, data de-corruption, statistical physics simulation, and dynamical robotic planning tasks across a number of datasets, showing that our method matches or surpasses existing approaches without being specifically tied to any single generative objective.
\end{itemize}

All common augmentations like text-conditioning and guidance can still be used just as before in our setup.  While our approach increases the complexity of the initial learning problem, we provide arguments that this additional complexity can be addressed through scale. In essence, the ``pretraining'' phase becomes more challenging, but the resulting model gains substantial flexibility and versatility that  compensates for the pretraining costs. That is, our approach can be seen as \emph{a way of amortizing learning over a variety of tasks.} This points toward a more general paradigm of universal generative models that can be trained once and then applied to a variety of objectives, potentially reducing computational and environmental costs associated with training separate models for each task.

\subsection{Related works}
\label{sec:related}


\paragraph{Flow matching and diffusion models.} 
Our approach extends the theoretical groundwork established in flow matching with stochastic interpolant and rectified flows \citep{lipman2022flow, albergo2022building, albergo2023stochastic, liu2022flow} as well as the probability flow formulations in diffusion models \citep{ho2020denoising, song2020score}. which established techniques for handling multiple target distributions simultaneously. Unlike data-dependent coupling approaches \citep{albergo_stochastic_dd_2023} that require constructing specific couplings for tasks like inpainting, our method learns a general operator space that naturally accommodates such tasks without additional coupling design. By introducing operator-valued interpolants, we enable a richer space of transformations between distributions, unlocking a flexible framework for multiple generative tasks.

\paragraph{Inverse problems and inpainting}
Our framework offers a unified approach to inverse problems, contrasting with traditional methods that require problem-specific variational optimization procedures \cite{pereyra2015survey}. Recent diffusion-based approaches \cite{chung2023diffusion,song2022solving, kawar2022denoising, martin2024pnp} typically need guided sampling trajectories tailored to each task. Similarly, methods using MCMC/SMC sampling \cite{coeurdoux2024plug,sun2024provable}, variational approximations \cite{mardani2023variational, alkan2023variational}, or optimization-driven techniques \cite{wang2024dmplug, daras2022score} remain fundamentally task-specific. Our approach encodes solution paths within the interpolant operator structure itself, enabling multiple inverse problems to be addressed through appropriate operator path selection during inference—without additional training. Our work can also be seen as a way to formalize methods that seek to clean data corrupted in various ways~\cite{bansal2022cold}

\paragraph{Multiscale and any-order generation} 
Recent approaches to generative modeling have explored hierarchical strategies through progressive refinement. Visual Autoregressive Modeling \citep{tian2024visual} employs next-scale prediction, while Fractal Generative Models \citep{li2025fractal} utilize self-similar structures for multiscale representations. These methods typically constrain generation to fixed paths established during training. In contrast, our framework decouples the training process from generation trajectories, allowing flexible path selection at inference time. This is conceptually related to optimizing generation order in discrete diffusion \citep{shi2025simplified} and token ordering studies \citep{kim2025train}, but provides greater flexibility by enabling post-training optimization and dynamic, self-guided generation strategies that adapt based on intermediate results.

\section{Theoretical framework}
\label{sec:theo}

Imagine we want to create a single generative model capable of multiple tasks - sampling new data, inpainting, denoising, and more. To achieve this, we need to expand beyond the traditional ``single path" between noise and data.  In this section, we develop the theoretical foundations of operator-based interpolants, which allow flexible navigation through a richer multidimensional space and will enable multitask generative capabilities discussed in Section~\ref{sec:multi}. 

\subsection{Operator-based interpolants}
\label{sec:si}

Suppose that we are given a couple of random variables $(x_0,x_1)$ both taking values in a Hilbert space $\calH$ (for example $\R^d$) and  drawn from a joint distribution $\mu(dx_0,dx_1)$. Our aim is to design a transport between a broad class of distributions supported on $\calH$ involving   \emph{mixtures of $x_0$ and $x_1$}. We will do so by generalizing the framework of stochastic interpolant. 

\begin{pinkbox}
\begin{restatable}[Operator-based interpolants]{definition}{sidef}
    \label{def:stoch_interp}
    Let $B(\calH)$ be a connected set of bounded linear operators on $\calH$ and let $S\subseteq B(\calH) \times B(\calH) $, also connected. Given any pair of linear operators $(\alpha, \beta) \in S$, the \emph{operator-based interpolant} $I(\alpha,\beta)$ is the stochastic process given by
    \begin{align}
        \label{eq:stoch:interp}
        I(\alpha,\beta) = \alpha  x_0 + \beta x_1, \qquad (\alpha,\beta) \in S,
    \end{align}
    where $(x_0,x_1)\sim \mu$. We will denote by $\mu_{\alpha,\beta}$ the probability distribution of $I(\alpha,\beta)$. 
\end{restatable}
\end{pinkbox}
If we picked e.g. $\alpha = (1-t) \:\text{Id}$ and $\beta = t \:\text{Id}$ with $t\in[0,1]$, respectively, we would go back to a standard stochastic interpolant, but we stress that the operator-based interpolants from Definition~\ref{def:stoch_interp} are much more general objects.
For example, if $\calH=\R^d$, we could take $B(\calH)$ to be a set of $d\times d$ matrices with real entries -- choices of $\alpha$, $\beta$ tailored to several multitask generation will be discussed in Section~\ref{sec:multi}. 
\emph{The main objective of this paper is to show how to exploit this flexibility of design.} Specifically we will show that we can learn a model that can be used to  transport samples of $I(\alpha,\beta)$ along \emph{any} paths of $(\alpha,\beta)$ in $S$  \emph{without having to choose any such path during training}. This will mean that transport problems in a broad class associated to a variety of tasks, including inpainting and block generation of any order, fine-tuning, etc.  will be \emph{pretrained} into this model.

\subsection{Multipurpose drifts and score}
\label{sec:mddrift}

To proceed, we introduce two drifts, which are functions of $S \times \calH$ taking value in $\calH$:
\begin{pinkbox}
\begin{restatable}[Multipurpose drift]{definition}{driftdef}
    \label{def:drift}
    The \emph{drifts} $\eta_0,\eta_1: S\times \calH \to \calH$ are given by
    \begin{align}
        \label{eq:stoch:interp:vel}
        \eta_0(\alpha,\beta,x) = \E[x_0 | I(\alpha,\beta) = x], \qquad  \eta_1(\alpha,\beta,x) = \E[x_1 | I(\alpha,\beta) = x],
    \end{align}
    where $\E[\:\:\cdot\:\:|I(\alpha,\beta) = x]$ denotes expectation over the coupling $(x_0, x_1) \sim \mu$ conditioned on the event $I(\alpha,\beta)=x$. 
\end{restatable}
\end{pinkbox}
Using the $L^2$ characterization of the conditional expectation, these drifts can be estimated via solution of a tractable optimization problem with an objective function involving an expectation:
\begin{pinkbox}
\begin{restatable}[Drift objective]{lemma}{driftlem}
    \label{def:drift:obj}
   Let $\nu(d\alpha,d\beta)$ be a probability distribution whose support is~$S$. Then the drifts $\eta_{0,1}(\alpha,\beta,x)$ in Definition~\ref{def:drift} can be characterized globally for all $(\alpha,\beta) \in S$ and all $x\in \mathop{\rm supp}(\mu_{\alpha,\beta})$ via solution of the optimization problems
    \begin{align}
        \label{eq:stoch:interp:vel:0}
        \eta_0 &= \mathop{{\rm argmin}}_{\hat \eta_0} \E_{\substack{(\alpha,\beta)\sim \nu\\ (x_0,x_1)\sim \mu}} \big[\|\hat\eta_0(\alpha,\beta,I(\alpha,\beta)) - x_0\|^2\big],\\
        \label{eq:stoch:interp:vel:1}
        \eta_1 &= \mathop{{\rm argmin}}_{\hat \eta_1} \E_{\substack{(\alpha,\beta)\sim \nu\\ (x_0,x_1)\sim \mu}} \big[\|\hat\eta_1(\alpha,\beta,I(\alpha,\beta)) - x_1\|^2\big],
    \end{align}
    where $\|\cdot\|$ denotes the norm in~$\calH$.
\end{restatable}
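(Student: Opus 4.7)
The plan is to reduce the global optimization problem to a family of pointwise $L^2$ projections indexed by $(\alpha,\beta)\in S$, and then lift this pointwise characterization back to the full optimization using the fact that $\nu$ has support equal to $S$. The argument is essentially the textbook one identifying a conditional expectation as the $L^2$-minimizer, plus a Fubini/disintegration step to handle the extra integration over $(\alpha,\beta)$.

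First I would expand the objective in \eqref{eq:stoch:interp:vel:0}. Under the second-moment assumption on $x_0$ (and restricting to jointly measurable, square-integrable candidates $\hat\eta_0$), Fubini's theorem yields
\begin{equation*}
\E_{\nu,\mu}\big[\|\hat\eta_0(\alpha,\beta,I(\alpha,\beta)) - x_0\|^2\big]
= \int_S L^{(\alpha,\beta)}_0\!\big(\hat\eta_0(\alpha,\beta,\cdot)\big)\,\nu(d\alpha,d\beta),
\end{equation*}
where for fixed $(\alpha,\beta)\in S$ I define
\begin{equation*}
L^{(\alpha,\beta)}_0(g) \;=\; \E_{(x_0,x_1)\sim \mu}\!\big[\|g(I(\alpha,\beta)) - x_0\|^2\big].
\end{equation*}
Next, for each fixed $(\alpha,\beta)$, I would invoke the standard $L^2$-projection theorem applied to the random variables $x_0 \in \calH$ and $I(\alpha,\beta)\in\calH$: among all measurable $g:\calH\to\calH$ with $g(I(\alpha,\beta))\in L^2$, the functional $L^{(\alpha,\beta)}_0(g)$ is uniquely minimized (up to $\mu_{\alpha,\beta}$-null sets) by $g^\star(x) = \E[x_0\mid I(\alpha,\beta)=x]$, which is precisely $\eta_0(\alpha,\beta,x)$ from Definition~\ref{def:drift}.

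Finally, I would lift this pointwise statement globally. Because the integrand $(\alpha,\beta)\mapsto L^{(\alpha,\beta)}_0(\hat\eta_0(\alpha,\beta,\cdot))$ is nonnegative and is pointwise minimized exactly by the choice $\hat\eta_0(\alpha,\beta,\cdot) = \eta_0(\alpha,\beta,\cdot)$, and because $\nu$ has support equal to $S$, any minimizer of the full objective must agree with $\eta_0$ on $S$ (in the sense that $\hat\eta_0(\alpha,\beta,\cdot) = \eta_0(\alpha,\beta,\cdot)$ holds $\mu_{\alpha,\beta}$-a.e.\ for $\nu$-a.e.\ $(\alpha,\beta)$). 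The same reasoning with $x_0$ replaced by $x_1$ gives the characterization of $\eta_1$ in \eqref{eq:stoch:interp:vel:1}.

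The only genuine subtlety I anticipate is the measurability of the joint map $(\alpha,\beta,x)\mapsto \eta_0(\alpha,\beta,x)$, which is needed to legitimately apply Fubini and to talk about a single minimizing function $\hat\eta_0$ over $S\times\calH$ rather than a separate one for each $(\alpha,\beta)$. This can be handled by a standard measurable-selection/regular-conditional-probability argument (or, in the practical setting where $(\alpha,\beta)\mapsto \mu_{\alpha,\beta}$ varies continuously, by continuity of the regression function), and is not where the conceptual content lies; I would relegate it to a short remark.
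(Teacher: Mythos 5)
Your proposal is correct and follows essentially the same route as the paper, which simply cites the $L^2$ characterization of conditional expectation as the least-squares-best predictor; your Fubini decomposition over $\nu$ and the pointwise-projection argument are just a fleshed-out version of that one-line proof. The measurability caveat you flag is reasonable and is indeed glossed over in the paper.
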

\end{pinkbox}
This lemma is proven in Appendix~\ref{app:proofs}.
Below we will use~\eqref{eq:stoch:interp:vel:0} and ~\eqref{eq:stoch:interp:vel:1} to learn $\eta_{0,1}$ over a rich parametric class made of deep neural networks. Note that the drifts $\eta_0$ and $\eta_1$ are not linearly independent since the definition of $I(\alpha,\beta)$ in~\eqref{eq:stoch:interp:hada} together with the equality $x=E[I(\alpha,\beta)|I(\alpha,\beta)=x]$ imply that
\begin{equation}
    \label{eq:linrel}
    x = \alpha \eta_0(\alpha,\beta,x) + \beta \eta_1(\alpha,\beta,x).
\end{equation}
Therefore we can obtain $\eta_0$ from $\eta_1$ if $\alpha$ is invertible and $\eta_1$ from $\eta_0$ if $\beta$ is invertible.  Note also that, when $x_0$ is Gaussian and $x_0\perp x_1$, $\eta_0$ is related to the score of the distribution of the stochastic interpolant:
\begin{pinkbox}
\begin{restatable}[Score]{lemma}{score}
    \label{lem:score}
   Assume that $\calH = \R^d$ and that the probability distribution $\mu_{\alpha,\beta}$ of the stochastic interpolant $I(\alpha,\beta)$ is absolutely continuous with respect to the Lebesgue measure with density $\rho_{\alpha,\beta}(x)$. Assume also that $x_0 \sim N(0,\text{Id})$  and $x_0\perp x_1$. Then the score $s_{\alpha,\beta}(x) = \nabla \log \rho_{\alpha,\beta}(x)$  is related to the drift $\eta_0(\alpha,\beta,x)$ via
    \begin{align}
        \label{eq:score}
        \eta_0(\alpha,\beta,x) = - \alpha s_{\alpha,\beta}(x).
    \end{align}
\end{restatable}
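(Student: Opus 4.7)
The plan is to combine a Gaussian integration-by-parts (Stein's lemma) with the characterization of $\eta_0$ as a conditional expectation, test the resulting identity against an arbitrary smooth function, and then read off the pointwise formula.

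Concretely, I would start from Definition~\ref{def:drift} and use the tower property to write, for any Schwartz test function $f:\R^d\to\R$,
\[
    \int f(x)\,\eta_0(\alpha,\beta,x)\,\rho_{\alpha,\beta}(x)\,dx = \E\big[x_0\, f(I(\alpha,\beta))\big] = \int g(x_0)\,\rho_1(x_1)\,x_0\,f(\alpha x_0 + \beta x_1)\,dx_0\,dx_1,
\]
where $g$ is the standard Gaussian density on $\R^d$ and $\rho_1$ is the law of $x_1$; the second equality uses the independence $x_0\perp x_1$. Invoking Stein's identity $x_0 g(x_0) = -\nabla_{x_0} g(x_0)$, integrating by parts in $x_0$, and applying the chain rule $\nabla_{x_0}\bigl[f(\alpha x_0 + \beta x_1)\bigr] = \alpha^{*}\nabla f(\alpha x_0 + \beta x_1)$ (with $\alpha^{*}$ the adjoint of $\alpha$, which coincides with $\alpha$ under the self-adjoint convention implicit in the statement) then yields
\[
    \int f(x)\,\eta_0(\alpha,\beta,x)\,\rho_{\alpha,\beta}(x)\,dx = \alpha^{*}\int \rho_{\alpha,\beta}(x)\,\nabla f(x)\,dx = -\alpha^{*}\int f(x)\, s_{\alpha,\beta}(x)\,\rho_{\alpha,\beta}(x)\,dx,
\]
after a final integration by parts in $x$ that uses $\nabla\rho_{\alpha,\beta} = s_{\alpha,\beta}\rho_{\alpha,\beta}$. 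Since $f$ is arbitrary, this produces~\eqref{eq:score} pointwise a.e.\ on $\supp(\mu_{\alpha,\beta})$.

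The main obstacle will be justifying the two integrations by parts without boundary terms. This is standard once one assumes mild integrability of $\rho_1$ and $\rho_{\alpha,\beta}$ (both implicit in the hypothesis that a density and a score exist): the first is controlled by the Gaussian decay of $g$ and any polynomial-growth allowance on $f$, and the second by a density argument on the class of test functions. All other steps are algebraic manipulations requiring no ingenuity beyond the identities above, so the substance of the lemma really lies in recognizing that the Gaussianity of $x_0$ converts the conditional expectation defining $\eta_0$ into a gradient acting on $\rho_{\alpha,\beta}$ via the chain rule through $\alpha$.
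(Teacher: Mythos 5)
Your proof is correct and follows essentially the same route as the paper, which simply cites Stein's lemma (Gaussian integration by parts) applied to $\E[x_0\mid I(\alpha,\beta)=x]$; you have merely written out the test-function computation that the paper leaves implicit. Your observation that the argument actually produces $-\alpha^{*}s_{\alpha,\beta}(x)$, agreeing with the stated $-\alpha s_{\alpha,\beta}(x)$ only when $\alpha$ is self-adjoint (as it is in all the paper's use cases, e.g.\ diagonal $\alpha$), is a valid and worthwhile refinement.
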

\end{pinkbox}
This lemma is proven in Appendix~\ref{app:proofs}.

\subsection{Transport with flows and diffusions}
\label{sec:flow:diff}

We can now state the main theoretical results of this paper: if we are able to sample the stochastic interpolant $I(\alpha_0,\beta_0)$ at a specific value $(\alpha_0,\beta_0)\in S$, then we can produce sample of $I(\alpha_t,\beta_t)$ along \emph{any} curve  $(\alpha_t,\beta_t)\in S$ by solving either a probability flow ODE or an SDE, assuming we have estimated the drifts $\eta_{0,1}(\alpha,\beta,x)$ from Definition~\ref{def:drift} along this curve:
\begin{pinkbox}
\begin{restatable}[Probability flow]{proposition}{pflow}
    \label{thm:flow}
    Let $(\alpha_t,\beta_t)_{t\in[0,1]}$ be any one-parameter family of operators $(\alpha_t,\beta_t) \in S$.  Assume that $\alpha_t,\beta_t$ are differentiable for all $t\in[0,1]$. Then, for all $t\in[0,1]$, the law of $I(\alpha_t,\beta_t)$ is the same as the law of the solution $X_t$ to
       \begin{equation}
    \label{eq:ode}
    \dot{X}_t = \dot \alpha_t \eta_0(\alpha_t,\beta_t,X_t)+ \dot \beta_t \eta_1(\alpha_t,\beta_t,X_t), \qquad X_{0} \overset{d}{=} I(\alpha_0,\beta_0).
\end{equation}

\end{restatable}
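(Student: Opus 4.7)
The plan is to verify the standard continuity-equation argument, adapted to the operator-valued setting. Let $\rho_t$ denote the law of $I(\alpha_t,\beta_t)$ (assumed to admit a density, or worked out weakly via test functions), and let $v_t(x) = \dot\alpha_t \eta_0(\alpha_t,\beta_t,x) + \dot\beta_t \eta_1(\alpha_t,\beta_t,x)$ be the drift appearing on the right-hand side of~\eqref{eq:ode}. The goal is to show that $\rho_t$ satisfies the continuity equation $\partial_t \rho_t + \nabla\cdot(v_t \rho_t) = 0$, and then to invoke uniqueness of solutions to this first-order linear PDE to conclude that the law of $X_t$ coincides with $\rho_t$, given that they coincide at $t=0$.

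First I would pick an arbitrary smooth compactly supported test function $\phi:\calH\to\R$ and compute $\frac{d}{dt}\E[\phi(I(\alpha_t,\beta_t))]$ by differentiating inside the expectation. Since $I(\alpha_t,\beta_t) = \alpha_t x_0 + \beta_t x_1$ and the operators $\alpha_t,\beta_t$ are linear and differentiable in $t$, this yields
\begin{equation*}
\frac{d}{dt}\E[\phi(I(\alpha_t,\beta_t))] = \E\bigl[\nabla\phi(I(\alpha_t,\beta_t))\cdot(\dot\alpha_t x_0 + \dot\beta_t x_1)\bigr].
\end{equation*}
Then I would use the tower property to condition on $I(\alpha_t,\beta_t)=x$ and exploit the linearity of conditional expectation together with the fact that $\dot\alpha_t,\dot\beta_t$ are deterministic linear operators: $\E[\dot\alpha_t x_0\mid I(\alpha_t,\beta_t)=x] = \dot\alpha_t\,\eta_0(\alpha_t,\beta_t,x)$, and analogously for $x_1$. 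This converts the right-hand side into $\int \nabla\phi(x)\cdot v_t(x)\,\rho_t(x)\,dx$, which equals $-\int \phi(x)\,\nabla\cdot(v_t\rho_t)(x)\,dx$ after integration by parts. Equating with $\int \phi(x)\,\partial_t\rho_t(x)\,dx$ and using that $\phi$ is arbitrary yields the continuity equation.

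Second, I would observe that the law $\tilde\rho_t$ of the ODE solution $X_t$ also satisfies the very same continuity equation $\partial_t\tilde\rho_t + \nabla\cdot(v_t\tilde\rho_t) = 0$ (this is the Liouville equation associated with~\eqref{eq:ode}, and follows by applying the same test-function argument to $\E[\phi(X_t)]$ using $\dot X_t=v_t(X_t)$). Since $X_0\overset{d}{=}I(\alpha_0,\beta_0)$, uniqueness of weak solutions to the continuity equation driven by a sufficiently regular vector field $v_t$ forces $\tilde\rho_t = \rho_t$ for all $t\in[0,1]$.

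The main obstacle is regularity: one needs enough smoothness/integrability of $\eta_0,\eta_1$ in $x$ (and of $\rho_t$) to justify the integration by parts and to invoke a uniqueness theorem (e.g., DiPerna--Lions or Ambrosio) for the continuity equation. These are standard assumptions in the stochastic interpolant literature and should be inherited here under mild hypotheses on $\mu$ and on the curve $t\mapsto(\alpha_t,\beta_t)$; nothing in the operator-valued setting changes the structure of the argument because the time-derivatives $\dot\alpha_t,\dot\beta_t$ commute with the (deterministic) conditional expectation and only enter linearly.
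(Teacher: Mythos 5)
Your proposal is correct and follows essentially the same route as the paper: the paper simply cites the standard stochastic-interpolant result that the law of $I_t$ is transported by the ODE with drift $b_t(x)=\E[\dot I_t\mid I_t=x]$ and then computes $b_t$ by linearity of the conditional expectation, whereas you unpack that citation by deriving the continuity equation via test functions and invoking uniqueness. The regularity caveats you flag (to justify differentiating under the expectation and uniqueness for the continuity equation) are exactly the hypotheses implicit in the cited framework, so nothing is missing.
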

\end{pinkbox}

This proposition is proven in Appendix~\ref{app:proofs}.  
Similarly, for generation with  an SDE, we have:

\begin{pinkbox}
\begin{restatable}[Diffusion]{proposition}{diff}
    \label{thm:diff}
    Assume that $\calH = \R^d$ and the probability distribution $\mu_{\alpha,\beta}$ of the stochastic interpolant $I(\alpha,\beta)$ is absolutely continuous with respect to the Lebesgue measure. Assume also that, in  $I(\alpha,\beta)$,  $x_0$ is Gaussian and $x_0\perp x_1$.
    Then, under the same conditions as in Proposition~\ref{thm:flow}, if $\alpha_t$ is invertible, for all $t\in[0,1]$ and any $\epsilon_t\ge 0$, the law of $I(\alpha_t,\beta_t)$ is the same as the law of the solution $X^\epsilon_t$ to
       \begin{equation}
    \label{eq:sde}
    d{X}^\epsilon_t = \big(\dot \alpha_t - \epsilon_t \alpha_t^{-1} \big) \eta_0(\alpha_t,\beta_t,X^\epsilon_t)dt+ \dot \beta_t \eta_1(\alpha_t,\beta_t,X^\epsilon_t) dt + \sqrt{2\epsilon_t} dW_t, \quad X^\epsilon_{0} \overset{d}{=} I(\alpha_0,\beta_0).
\end{equation}
where $W_t$ is a Wiener process in $\R^d$.
\end{restatable}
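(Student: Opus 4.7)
The plan is to reduce Proposition~\ref{thm:diff} to Proposition~\ref{thm:flow} by exploiting the fact that the extra drift term $-\epsilon_t \alpha_t^{-1}\eta_0$ is precisely the score correction needed to offset the new diffusion term at the level of the Fokker--Planck equation. Concretely, from Proposition~\ref{thm:flow} we already know that $\rho_{\alpha_t,\beta_t}$ satisfies the continuity equation
\begin{equation*}
    \partial_t \rho_{\alpha_t,\beta_t} + \nabla\cdot\big(b^{\mathrm{ode}}_t\,\rho_{\alpha_t,\beta_t}\big) = 0, \qquad b^{\mathrm{ode}}_t(x) = \dot\alpha_t\eta_0(\alpha_t,\beta_t,x) + \dot\beta_t\eta_1(\alpha_t,\beta_t,x),
\end{equation*}
so it suffices to verify that this same $\rho_{\alpha_t,\beta_t}$ is the solution to the Fokker--Planck equation for~\eqref{eq:sde}.

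First, I would write down the Fokker--Planck equation associated with the SDE~\eqref{eq:sde}, namely $\partial_t \rho = -\nabla\cdot(b^{\mathrm{sde}}_t \rho) + \epsilon_t \Delta \rho$ with $b^{\mathrm{sde}}_t = (\dot\alpha_t-\epsilon_t\alpha_t^{-1})\eta_0+\dot\beta_t\eta_1$, and rewrite the diffusion term via the standard identity $\epsilon_t \Delta \rho = \nabla\cdot(\epsilon_t \rho \nabla\log\rho) = \nabla\cdot(\epsilon_t \rho\, s_{\alpha_t,\beta_t})$. This collapses the FPE to the continuity equation
\begin{equation*}
    \partial_t \rho + \nabla\cdot\big((b^{\mathrm{sde}}_t - \epsilon_t s_{\alpha_t,\beta_t})\rho\big) = 0.
\end{equation*}
Second, I would invoke Lemma~\ref{lem:score}, which under the stated Gaussian and independence hypotheses gives $\eta_0(\alpha_t,\beta_t,x) = -\alpha_t s_{\alpha_t,\beta_t}(x)$; the assumed invertibility of $\alpha_t$ lets me write $s_{\alpha_t,\beta_t} = -\alpha_t^{-1}\eta_0$. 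Substituting,
\begin{equation*}
    b^{\mathrm{sde}}_t - \epsilon_t s_{\alpha_t,\beta_t} = (\dot\alpha_t - \epsilon_t\alpha_t^{-1})\eta_0 + \dot\beta_t\eta_1 + \epsilon_t\alpha_t^{-1}\eta_0 = \dot\alpha_t \eta_0 + \dot\beta_t \eta_1 = b^{\mathrm{ode}}_t,
\end{equation*}
so the continuity equation derived from the SDE coincides identically with the one satisfied by $\rho_{\alpha_t,\beta_t}$ by Proposition~\ref{thm:flow}. Since both densities share the initial condition $\rho_{\alpha_0,\beta_0}$, a uniqueness argument for the continuity (equivalently Fokker--Planck) equation forces them to agree on $[0,1]$, which is the desired equality in law.

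The main obstacle, as usual in such arguments, is the rigorous uniqueness step: one must guarantee that $b^{\mathrm{sde}}_t$ is regular enough (e.g.\ locally Lipschitz with appropriate growth) and that $\epsilon_t$ is sufficiently regular for the SDE~\eqref{eq:sde} to admit a well-posed weak solution whose law is uniquely characterized by the Fokker--Planck equation. I would handle this exactly as in the proof of Proposition~\ref{thm:flow}, imposing enough smoothness on $\eta_{0,1}(\alpha_t,\beta_t,\cdot)$, $\alpha_t^{-1}$, and $\rho_{\alpha_t,\beta_t}$ so that both sides of the equivalence of continuity equations above hold in the distributional sense and uniqueness of solutions applies; the algebraic cancellation between the score correction and the added noise is the only nontrivial structural content, and it is already dispatched by the computation above.
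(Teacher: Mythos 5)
Your proposal is correct and follows essentially the same route as the paper: the paper likewise reduces the SDE to the probability flow ODE by noting that the added noise is compensated by a score term $\epsilon_t s_t$ at the level of the Fokker--Planck equation (a step it delegates to the standard stochastic interpolant framework, which you instead write out via $\epsilon_t\Delta\rho=\nabla\cdot(\epsilon_t\rho\,s)$), and then invokes Lemma~\ref{lem:score} together with the invertibility of $\alpha_t$ to substitute $s_{\alpha_t,\beta_t}=-\alpha_t^{-1}\eta_0$. The algebraic cancellation you exhibit is exactly the content of the paper's argument.
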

\end{pinkbox}
This proposition is also proven in Appendix~\ref{app:proofs}. Note that the SDE~\eqref{eq:sde} reduces to the ODE~\eqref{eq:ode} if we set $\epsilon_t=0$. Note that if $\alpha_t$ is positive-definite we can use $\bar \epsilon_t = \epsilon_t \alpha_t^{-1}$ as new diffusion coefficient, which set the noise term in~\eqref{eq:sde} to $\sqrt{2\epsilon_t} \alpha_t^{1/2} dW_t$; this allows to extend this SDE to paths along which we can have $\alpha_t=0$.

\section{Multitask generation}
\label{sec:multi}

In this section we discuss how to use the theoretical framework introduced in Section~\ref{sec:theo}  to perform multiple generative tasks without having to doing any retraining.

\subsection{Self-supervised generation and inpainting}
\label{sec:condgen}

\begin{figure}
  \centering
  \begin{minipage}[c]{0.50\linewidth}
    \includegraphics[width=\linewidth]{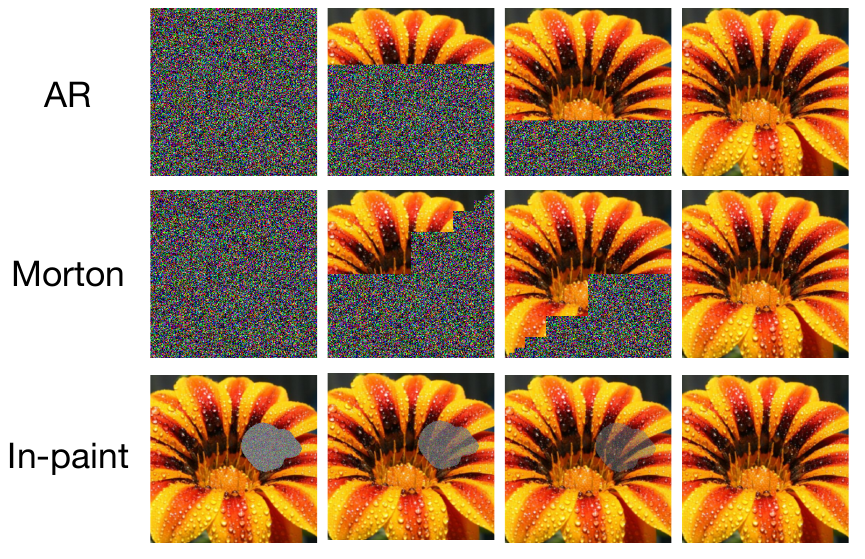}
  \end{minipage}\hfill
  \begin{minipage}[c]{0.48\linewidth}
  \vspace{0.3cm}
    \captionof{figure}{\textbf{Multi-task, self-supervised sampling:}
    Schematic representation of various sub-tasks that are captured by the minimizer of our learning objective using the Hadamard-product interpolant in~\eqref{eq:stoch:interp:hada}.  A generative task is
    chosen in a zero-shot manner by specifying~$\alpha$ as a function of time after training. This $\alpha_t$ serves as a continual self-supervision of what has been unmasked vs. what remains. \textbf{Top}: $\alpha_t$ is chosen to generate pixels in an autoregressive fashion. \textbf{Middle}: $\alpha_t$ is chosen to sample along a fractal morton order. \textbf{Bottom}: $\alpha_t$ can be chosen to do zero-shot inpainting. }
    \label{fig:overview:1}
  \end{minipage}
\end{figure}

Inpainting—the task of filling in missing parts of an image—traditionally requires specialized training for each possible mask configuration. Our operator-based framework enables a fundamentally different approach: a single model that can perform inpainting with any arbitrary mask, chosen at inference time, or can generate samples from scratch in an arbitrary ordering of the generation. This includes standard generation of all dimensions at once, autoregressive generation dimension by dimension, blockwise fractal generation, and so forth. In particular, we may want to construct a generative model that fills in missing entries from a sample $x_1 \in \mathbb{R}^d$ drawn from a data distribution $\mu_1$. We would like this model to be universal, in the sense that it can be used regardless of which entries are missing; their number and position can be arbitrary and changed post-training, allowing for flexible inpainting and editing (see Figure~\ref{fig:overview:1} for an illustration). This approach creates a natural self-supervision mechanism, as the model continuously tracks which parts have been generated and which remain to be filled.

To perform this task,  assume that  $x_1$ is drawn from the data distribution $\mu_1$ of interest and $x_0$ drawn independently from $N(0,\text{Id})$, so that $\mu = N(0,\text{Id}) \times \mu_1 $, set $\beta = 1 -\alpha$ in the operator interpolant~\eqref{eq:stoch:interp}, and assume that $\alpha$ is a diagonal matrix. With a slight abuse of notations we can then identify the diagonal elements of the matrix $\alpha$ with a vector $\alpha\in \R^d$ and write~\eqref{eq:stoch:interp} as  
\begin{align}
    \label{eq:stoch:interp:hada}
    I(\alpha) = \alpha \odot x_0 + (1-\alpha) \odot x_1,
\end{align}
where $\odot$ denotes the Hadamard (i.e. entrywise) product. The drift to learn in this case is
\begin{equation}
    \label{eq:eta}
    \eta(\alpha,x) = \E\big[x_0-x_1|\alpha\odot x_0 + (1-\alpha) \odot x_1 =x \big]
\end{equation}
for $\alpha \in [0,1]^d$, and this learning can be done via solution of
\begin{align}
        \label{eq:vel:obj}
        \min_{\hat \eta} \E_{\substack{\alpha\sim U([0,1]^d)\\ x_0\sim N(0,\text{Id})\\ x_1\sim \mu_1}} \big[\|\hat\eta(\alpha,I(\alpha\odot x_0 + (1-\alpha) \odot x_1) +x_1 - x_0)\|^2\big],
    \end{align}
Denoting $x_1=(x_1^1, x^2_1, \ldots, x^d_1)$, suppose that we observe $x_1^i$ for the entries with $i\in \sigma \subset \{1,\ldots, d\}$ and would like to infer the missing entries with $i\in\sigma^c=\{1,\ldots,d\}\setminus\sigma$. To perform this inpainting we can use the probability flow ODE~\eqref{eq:ode} with a path $\alpha_t$ such that $\alpha^i_t =0$ if $i\in \sigma$ and $\alpha^i_t=1-t$ if $i\in \sigma^c$. Note that this can be done for \emph{any choice of $\sigma$ without retraining}.

\subsection{Multichannel denoising }
\label{sec:denoising}

Suppose that $B_1, B_2, \ldots B_n$ are deterministic corruption operators that can  be applied to the data. For example $B_1$ could be a high-pass filter, $B_2$ a motion blur, etc. and they could be defined primarily in the Fourier representation of the data. Similarly, if $x_0\sim N(0,\text{Id})$, let $A_1, A_2, \ldots A_m$ be operators that give some structure to this noise (e.g. some spatial correlation over the domain of the data). Set $A_0=B_0= \text{Id}$ and take 
\begin{equation}
    \label{eq:ex:op}
    \alpha = \sum_{i=0}^m a_i A_i, \qquad   \beta = \sum_{i=0}^n b_i B_i
\end{equation}
where $a_i,b_i$ are nonegative scalars each taking values in some range that includes 0 and $b_0=1$. With this choice we can find path $(\alpha_t,\beta_t)_{t\in[0,1]}$ that bridges data $x_1\sim \mu$ corrupted in any possible channel as $\sum_{i=1}^m a_i A_ix_0+\sum_{i=1}^n b_i B_ix_1$ for some choice of $(a_1, a_2, \ldots ,b_1, b_2, \ldots)$ back to the clean data via a path that bridges this choice of parameters to $b_0=1$ and $a_0=a_1\ldots = b_1= \ldots = 0$. See Figure~\ref{fig:overview:2} for an illustration of two possible corruption schemes.

\begin{figure}
  \centering
  \begin{minipage}[c]{0.55\linewidth}
    \includegraphics[width=\linewidth]{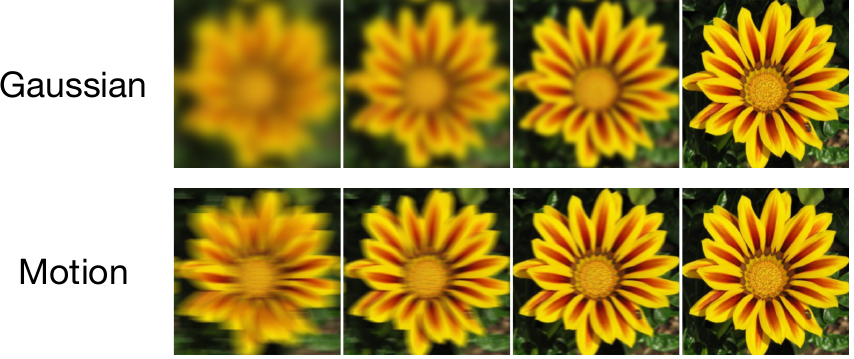}
  \end{minipage}\hfill
  \begin{minipage}[c]{0.44\linewidth}
    \vspace{0.4cm}
    \captionof{figure}{\textbf{Multichannel denoising:}
    Possible interpolations fulfilled by various choices of operators in \eqref{eq:ex:op}. We present two such examples in the form of Gaussian and motion blurring, realized by interpolations defined in the Fourier domain.  }
    \label{fig:overview:2}
  \end{minipage}
\end{figure}

\subsection{Fine-tuning and posterior sampling}
\label{sec:fine:tune}

Suppose that we are given data from a distribution $\mu_1(dx)$,  and that we would like to generate samples from $\mu^r_1(dx) = Z^{-1} e^{r(x)} \mu_1(dx)$ where $r:\calH \to \R$ is a reward function and $Z= \int_{\calH} e^{r(x)} \mu_1(dx)$ is a normalization function, which is unknown to us but we assume finite -- in the context of Bayesian inference, $\mu_1$ plays the role of prior distribution, $r$ is the likelihood, and $\mu_1^r$ is the posterior distribution. We will assume that the reward  $r$ is a quadratic function, i.e.
\begin{equation}
    \label{eq:quad:r}
    r(x) = \tfrac12 \langle x, A x\rangle + \langle b,x\rangle
\end{equation}
where $A$ is a definite negative bilinear operator on $\calH$, $b\in \calH$, and $\langle \cdot,\cdot\rangle$ denotes the inner product on $\calH$. For simplicity we also assume that $\calH=\R^d$: the general case can be treated similarly.

In this context, assume that we have learned the drifts $\eta_{0,1}$ associated with the interpolant
\begin{equation}
    \label{eq:I:prior}
    I(\alpha,\beta) = \alpha x_0+ \beta x_1, \qquad x_0 \sim N(0,\text{Id}), \quad x_1\sim \mu_1,  \quad x_0 \perp x_1
\end{equation}
so that we can generate samples from the prior distribution. Our next result shows that this gives us access to the drifts $\eta^r_{0,1}$ associated with the interpolant
\begin{equation}
    \label{eq:I:posterior}
    I_r(\alpha,\beta) = \alpha x_0+ \beta x^r_1, \qquad x_0 \sim N(0,\text{Id}), \quad x^r_1\sim \mu^r_1,  \quad x_0 \perp x^r_1
\end{equation}
involving data $x_1^r$ from the posterior distribution.

\begin{pinkbox}
\begin{restatable}{proposition}{twoeta}
    \label{thm:2eta}
    Let 
    \begin{equation}
        \label{eq:2eta}
        \eta_0(\alpha,\beta,x) = \E\big[x_0| I(\alpha,\beta) = x\big], \qquad \eta_1(\alpha,\beta,x) = \E\big[x_1| I(\alpha,\beta) = x\big],
    \end{equation}
    be the drifts associated with the interpolant~\eqref{eq:I:prior} and
     \begin{equation}
        \label{eq:2etar}
        \eta^r_0(\alpha,\beta,x) = \E\big[x_0| I_r(\alpha,\beta) = x\big], \qquad \eta^r_1(\alpha,\beta,x) = \E\big[x_1^r| I_r(\alpha,\beta) = x\big],
    \end{equation}
    be the drifts associated with the interpolant~\eqref{eq:I:posterior}. If $\alpha$ and $\beta$ are invertible, then 
    \begin{align}
        \label{eq:rekate:drift0:r}
        \eta^r_0(\alpha,\beta,x) & = \alpha^{-1} \beta \beta_r^{-1} \alpha_r \eta_0(\alpha_r,\beta_r,x_r) + \alpha^{-1} (x - \beta \beta_r^{-1} x_r)\\
        \label{eq:rekate:drift1:r}
        \eta^r_1(\alpha,\beta,x) &=\eta_1(\alpha_r,\beta_r,x_r)
    \end{align}
    as long as we can find a pair $(\alpha_r,\beta_r)$ that satisfies
    \begin{align}
        \label{eq:ab}
        \beta_r^T \alpha_r^{-T} \alpha^{-1}_r \beta_r&= \beta^T\alpha^{-T}\alpha^{-1} \beta - A
    \end{align}
    and $x_r$ is given by
    \begin{align}
        \label{eq:xr}
        x_r &= \alpha_r \alpha_r^T \beta_r^{-T}\left( \beta^T \alpha^{-T} \alpha^{-1} x +  b\right).
        \end{align}
\end{restatable}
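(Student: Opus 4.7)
The plan is to reduce the posterior conditional expectations to prior ones by absorbing the reward $e^{r(x_1)}$ into the Gaussian conditional likelihood of $I(\alpha,\beta)$ given $x_1$. Concretely, I would start from the joint density of $(I(\alpha,\beta),x_1)$ under the prior coupling. Since $x_0\sim N(0,\mathrm{Id})$ and $\alpha$ is invertible, the conditional density of $X=I(\alpha,\beta)$ given $x_1$ is the Gaussian
\begin{equation*}
  p(x\mid x_1) \propto \exp\!\Big(-\tfrac12 (x-\beta x_1)^T \alpha^{-T}\alpha^{-1}(x-\beta x_1)\Big),
\end{equation*}
and the conditional density under the posterior coupling is $p(x\mid x_1)e^{r(x_1)}/Z$ marginalized against $\mu_1$.

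\textbf{Main step: completing the square in $x_1$.} The core computation is to rewrite
\begin{equation*}
  p(x\mid x_1)\,e^{r(x_1)} \;=\; C(x)\;\tilde p(x_r\mid x_1),
\end{equation*}
where $\tilde p(\,\cdot\mid x_1)$ is the Gaussian conditional density of the modified interpolant $I(\alpha_r,\beta_r)$ given $x_1$, and $C(x)$ depends on $x$ but not on $x_1$. Expanding both sides and collecting terms in $x_1$, the quadratic-in-$x_1$ coefficient forces
\begin{equation*}
  \beta_r^T\alpha_r^{-T}\alpha_r^{-1}\beta_r \;=\; \beta^T\alpha^{-T}\alpha^{-1}\beta - A,
\end{equation*}
which is exactly~\eqref{eq:ab}; the right-hand side is positive definite because $A\preceq 0$, so a solution $(\alpha_r,\beta_r)$ exists (this is where the definite-negativity of $A$ is used). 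The linear-in-$x_1$ coefficient forces
\begin{equation*}
  \beta_r^T\alpha_r^{-T}\alpha_r^{-1} x_r \;=\; \beta^T\alpha^{-T}\alpha^{-1} x + b,
\end{equation*}
and solving for $x_r$ (assuming $\beta_r$ invertible) gives~\eqref{eq:xr}. The remaining $x$-dependent residual absorbs into $C(x)$.

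\textbf{From densities to drifts.} Integrating the identity $p(x\mid x_1)\,e^{r(x_1)}=C(x)\tilde p(x_r\mid x_1)$ against $\mu_1(dx_1)$ gives the marginal $\rho^r_{\alpha,\beta}(x) = Z^{-1}C(x)\rho_{\alpha_r,\beta_r}(x_r)$. Dividing, I see that the conditional law of $x_1$ given $I_r(\alpha,\beta)=x$ coincides with the conditional law of $x_1$ given $I(\alpha_r,\beta_r)=x_r$ under the prior coupling. Taking expectations immediately yields
\begin{equation*}
  \eta^r_1(\alpha,\beta,x) = \eta_1(\alpha_r,\beta_r,x_r),
\end{equation*}
which is~\eqref{eq:rekate:drift1:r}. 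For $\eta^r_0$, I apply the linear consistency relation~\eqref{eq:linrel} in both interpolants: $x=\alpha\eta^r_0+\beta\eta^r_1$ and $x_r=\alpha_r\eta_0(\alpha_r,\beta_r,x_r)+\beta_r\eta_1(\alpha_r,\beta_r,x_r)$. Solving the first for $\eta^r_0$, substituting $\eta^r_1=\eta_1(\alpha_r,\beta_r,x_r)$, and using the second to eliminate $\eta_1(\alpha_r,\beta_r,x_r)$ in favor of $\eta_0(\alpha_r,\beta_r,x_r)$, produces~\eqref{eq:rekate:drift0:r} after rearrangement.

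\textbf{Anticipated obstacle.} The arithmetic bottleneck is the completing-the-square step; one has to track the transposes carefully since $\alpha,\beta,\alpha_r,\beta_r$ are not assumed symmetric, and the $x_1^Tq(x)$ cross term must be matched in the correct orientation to yield~\eqref{eq:xr}. Once the bookkeeping is done correctly, the rest of the proof is routine integration and algebraic rearrangement using~\eqref{eq:linrel}.
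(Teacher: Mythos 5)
Your proposal is correct and follows essentially the same route as the paper: both absorb $e^{r(x_1)}$ into the Gaussian kernel $e^{-\frac12|\alpha^{-1}(x-\beta x_1)|^2}$ by matching the quadratic, linear, and constant coefficients in $x_1$ (yielding \eqref{eq:ab} and \eqref{eq:xr}), identify the conditional law of $x_1$ given $I_r(\alpha,\beta)=x$ with that given $I(\alpha_r,\beta_r)=x_r$ to get \eqref{eq:rekate:drift1:r}, and then apply the linear relation \eqref{eq:linrel} in both interpolants to obtain \eqref{eq:rekate:drift0:r}. The only cosmetic difference is that the paper packages the coefficient-matching as a separate statement about the measures (Proposition~\ref{thm:2mu}) before deducing the drift identities, while you work directly with conditional densities; your added remark that $A\preceq 0$ makes the right-hand side of \eqref{eq:ab} positive definite is a small bonus not spelled out in the paper.
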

\end{pinkbox}

This proposition is proven in Appendix~\ref{app:proofs} as a corollary of Proposition~\ref{thm:2mu} that relates the probability distribution of $I_r$ to that of $I$. Proposition~\ref{thm:2eta} offers a way to sample the posterior distribution without retraining, by using the drifts~\eqref{eq:rekate:drift0:r} and \eqref{eq:rekate:drift1:r} in the ODE~\eqref{eq:ode} or the SDE~\eqref{eq:sde}.

\subsection{Inference adaption}
\label{sec:multiscale:inf}

Suppose that we have learned the drifts $\eta_{0,1}$ in Definition~\ref{def:drift} and wish to transport samples along a path $(\alpha_t,\beta_t)$ with fixed end points. We can leverage the flexibility of our formulation to perform \textit{inference adaptation}, that is, optimize the path $(\alpha_t,\beta_t)$ used during generation to achieve specific objectives, such as minimizing computational cost, maximizing sample quality, or satisfying user constraints. This can be done in two ways: (1) \textit{offline optimization}, where we pre-compute optimal paths for different scenarios using objectives like Wasserstein length minimization, and (2) \textit{online adaptation}, where paths are dynamically adjusted during generation based on intermediate results or user feedback.

In the case of offline optimization, we could for example optimize the Wasserstein length of the path. That is, if we want to bridge the distributions $\mu_{\alpha_0,\beta_0}$ of $I(\alpha_0,\beta_0)$ and $\mu_{\alpha_1,\beta_1}$ of $I(\alpha_1,\beta_1)$ via $\mu_{\alpha_t,\beta_t}$ with $(\alpha_t,\beta_t)\in S$ for all $t\in[0,1]$ then the path that minimizes the Wasserstein length of the bridge distribution $\mu_{\alpha_t,\beta_t}$ solves
\begin{equation}
    \label{eq:OT}
    \min_{(\alpha_t,\beta_t)_{t}} \int_0^1 \E \big[ \|\dot \alpha_t \eta_0(\alpha_t,\beta_t,I(\alpha_t,\beta_t))+\dot \beta_t \eta_1(\alpha_t,\beta_t,I(\alpha_t,\beta_t))\|^2\big]  dt
\end{equation}
where the minimization is performed over paths  $(\alpha_t,\beta_t)_{t}\equiv(\alpha_t,\beta_t)_{t\in[0,1]}$ such that $(\alpha_t,\beta_t)\in S$ for all $t\in[0,1]$ with their end points $(\alpha_0,\beta_0)$ and $(\alpha_1,\beta_1)$ prescribed and fixed.

\begin{algorithm}[h!]
	\caption{Multitask learner}
	\label{alg:multi:learn}
	\SetAlgoLined
	\SetKwInput{Input}{input}
	\Input{Samples $(x_0,x_1) \sim \mu$;  choice of distribution $\nu(d\alpha,d\beta)$ and associated sampler.}
	\Repeat{converged}{
		Draw batch $(x^i_0, x^1_i, \alpha_i, \beta_i)_{i=1}^{M} \sim \mu \times \nu$.\\
		Compute $I_i = \alpha_i x_0^i+ \beta_i x_1^i$.\\
		Compute $\hat L = \frac1M \sum_{i=1}^M \| \hat \eta_{0} (\alpha_i,\beta_i, I_i) - x_0^i\|^2 + \|\hat  \eta_{1} (\alpha_i,\beta_i, I_i) - x_1^i\|^2$.\\
		Take a gradient step on $\hat L$ to update $\hat \eta_0$ and $\hat \eta_1$.
	}
	\SetKwInput{Output}{output}
	\Output{Drifts $\hat \eta_0$ and $\hat \eta_1$.}
\end{algorithm}

\begin{algorithm}[h!]
	\caption{Multitask generator}
	\label{alg:multi:gen}
	\SetAlgoLined
	\SetKwInput{Input}{input}
	\Input{Drifts $\hat \eta_0$, $\hat \eta_1$; choice of path $(\alpha_t\beta_t)_{t\in[0,1]}$ tailored to the generation task; data $I(\alpha_0,\beta_0)= \alpha_0 x_0 + \beta_0 x_1$; diffusion coefficient $\epsilon_t \ge 0$; time step $h=1/K$ with $K\in\N$.}
\SetKwInput{Initialize}{initialize}
\Initialize{$\hat X^\epsilon_{0} = I(\alpha_0,\beta_0)$\;}
\For{$k=0, \dots, K-1$} { 
    set $\hat \eta_0^k = \hat \eta_0(\alpha_{kh},\beta_{kh},\hat X^\epsilon_{kh})$, $\hat \eta_1^k = \hat \eta_1(\alpha_{kk},\beta_{kh},\hat X^\epsilon_{kh})$, and $z_k \sim N(0,\text{Id})$\\
    update 
    $\hat X^\epsilon_{{k+1}k} =  \hat X^\epsilon_{kh} + h \big(\dot \alpha_{kk} - \epsilon_{kh} \alpha_{kk}^{-1} \big) \hat \eta^k_0+ h \dot \beta_{kh} \hat \eta^k_1+ \sqrt{2\epsilon_{kh} h} \, z_k,$
}
\SetKwInput{Output}{output}
\Output{$\hat X^\epsilon_1\overset{d}{=} I(\alpha_1,\beta_1)$ (approximately)}
\end{algorithm}

\section{Algorithmic aspects}
\label{sec:alogO}

The algorithmic aspects of our framework can be summarized in a few key steps. First, we define a connected set $S$ of $(\alpha,\beta)$ such that the ensemble of different tasks we will want to perform correspond to getting samples of $I(\alpha,\beta)$ at some value of $(\alpha_0,\beta_0)\in S$ and generating from them new data at another value of $(\alpha_1,\beta_1)\in S$. 
Second, we specify a measure $\nu$ on $S$ for the learning of the drifts $\eta_0$ and $\eta_1$ defined in~\eqref{eq:stoch:interp:vel}. 
Third, we learn these drifts via minimization of the objectives in \eqref{eq:stoch:interp:vel:0} and \eqref{eq:stoch:interp:vel:1}, using the procedure outlined in Algorithm~\ref{alg:multi:learn}. Note that we can possibly simplify this algorithm, learning only one of the two drifts and obtaining the other through the relation~\eqref{eq:linrel}.
Finally, given any pairs $(\alpha_0,\beta_0),(\alpha_1,\beta_1)\in S$, we use a path $(\alpha_t,\beta_t)_{t\in[0,1]}$ with $\alpha_t,\beta_t\in S$ for all $t\in[0,1]$ and integrate the SDE~\eqref{eq:sde} (or possibly the ODE~\eqref{eq:ode} if we set $\epsilon_t=0$) to perform the generation, as outlined in Algorithm~\ref{alg:multi:gen}. Note that this path could also be adapted on-the-fly during inference, using some feedback about the solution of the SDE. 

\section{Numerical experiments}
\label{sec:num}
Below we provide  numerical realization of some of the various objectives that can be fulfilled with the multitask objective. 
Details of the experimental setup can be found in Appendix~\ref{app:experiment}.

\subsection{Multitask inpainting and sequential generation}
\label{sec:mnist}

We evaluate our method on three datasets: MNIST, with images of size $28 \times 28$, CelebA, resized to $128 \times 128$, and of Animal FacesHQ focused on cat class, with images resized to $256 \times 256$. Details of the experimental setup are standard and can be found in Appendix \ref{app:experiment}. In these experiments, we use the Hadamard interpolant \eqref{eq:stoch:interp:hada}.

\textbf{MNIST.} We demonstrate the versatility of our operator-based interpolant framework through inpainting and sequential generation tasks on MNIST. The results are shown Figure~\ref{fig:mnist} where all the generated images come from the same model without any retraining. 

For inpainting (left panels), we replace masked regions with Gaussian noise (shown as pink for clarity), then generate only these regions while preserving unmasked pixels. This is achieved by setting the entries of $\alpha$ to $1-t$ for masked pixels and $0$ for unmasked ones. To preserve unmasked pixels, we apply a secondary mask setting $\eta(\alpha,x)$ to zero at these positions.

Sequential generation (right panels) reformulates image creation as progressive inpainting. Starting with pure Gaussian noise, we generate the image block-by-block by successively updating the operator masks. Unlike single-pass inpainting, this requires multiple forward passes---one per block. For each pass, we apply $\alpha = 1-t$ only to pixels in the current generation block, maintaining appropriate values for previously generated and remaining noise regions.

\textbf{CelebA and AFHQ-Cat.} We present benchmark results for all methods across various image restoration tasks, evaluating the average peak signal-to-noise ratio (PSNR) and structural similarity index (SSIM) on 100 test images from each dataset: AFHQ-Cat ($256\times256$) and CelebA ($128\times128$). To assess the performance of our methodology, we employed two types of masking: square masks of sizes $40\times40$ and $80\times80$ with added Gaussian noise of standard deviation 0.05, and random masks covering $70\%$ of image pixels with Gaussian noise of standard deviation 0.01. We benchmark our method against four state-of-the-art interpolant-based restoration methods: PnP-Flow \cite{martin2024pnp}, Flow-Priors \cite{zhang2024flow}, D-Flow \cite{hamu2024dflow}, OT-ODE \cite{pokle2024trainingfree}.

As shown in Table \ref{tab:inpainting_results}, our method consistently ranks either first or second in both reconstruction metrics across all tasks and datasets (with all values except the last row taken from \cite{martin2024pnp}). Regarding visual quality (Fig. \ref{fig:afhq_celeba}), our method generates realistic, artifact-free images, albeit with slight over-smoothing at times.

\begin{table}[htbp]
    \centering
    \caption{PSNR and SSIM metrics for image inpainting methods on CelebA and AFHQ-Cat datasets.}
    \begin{tabular}{lcccccccc}
        \toprule
        \multirow{3}{*}{Method} & \multicolumn{4}{c}{CelebA} & \multicolumn{4}{c}{AFHQ-Cat} \\
        \cmidrule(lr){2-5} \cmidrule(lr){6-9}
        & \multicolumn{2}{c}{Random} & \multicolumn{2}{c}{Block} & \multicolumn{2}{c}{Random} & \multicolumn{2}{c}{Block} \\
        \cmidrule(lr){2-3} \cmidrule(lr){4-5} \cmidrule(lr){6-7} \cmidrule(lr){8-9}
        & PSNR & SSIM & PSNR & SSIM & PSNR & SSIM & PSNR & SSIM \\
        \midrule
        Degraded & 11.82 & 0.197 & 22.12 & 0.742 & 13.35 & 0.234 & 21.50 & 0.744 \\ \hdashline
        \cite{pokle2024trainingfree} & 28.36 & 0.865 & 28.84 & 0.914 & 28.84 & 0.838 & 23.88 & 0.874 \\
        \cite{hamu2024dflow} & 33.07 & 0.938 & 29.70 & 0.893 & 31.37 & 0.888 & 26.69 & 0.833 \\
        \cite{zhang2024flow} & 32.33 & 0.945 & 29.40 & 0.858 & 31.76 & 0.909 & 25.85 & 0.822 \\
        \cite{martin2024pnp} & 33.54 & 0.953 & \textbf{30.59} & \textbf{0.943} & 32.98 & 0.930 & 26.87 & 0.904 \\ \hdashline
        Ours & \textbf{33.76} & \textbf{0.967} & 29.98 & 0.938 & \textbf{33.11} & \textbf{0.945} & \textbf{26.96} & \textbf{0.914} \\
        \bottomrule
    \end{tabular}
    \label{tab:inpainting_results}
\end{table} 
\begin{figure}[t]
  \centering
    \includegraphics[width=0.7\linewidth]{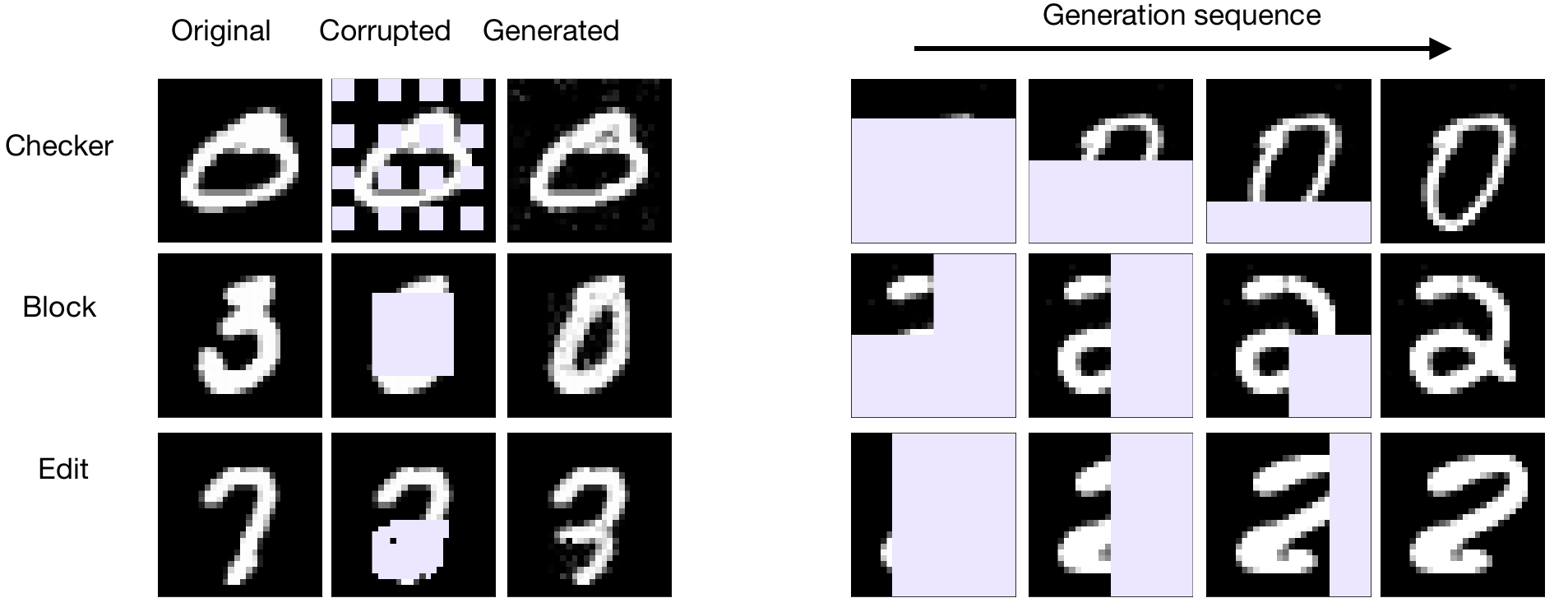}
    \caption{ \textbf{Left}: In-painting on MNIST using various corruptions.  \textbf{Right}: Image generation in arbitrary orders, starting from the same initial noise, with examples showing autoregressive, block-wise, and column-wise.}
    \label{fig:mnist}
\end{figure}
%
%
\begin{figure}
    \centering
    \includegraphics[width=0.75\linewidth]{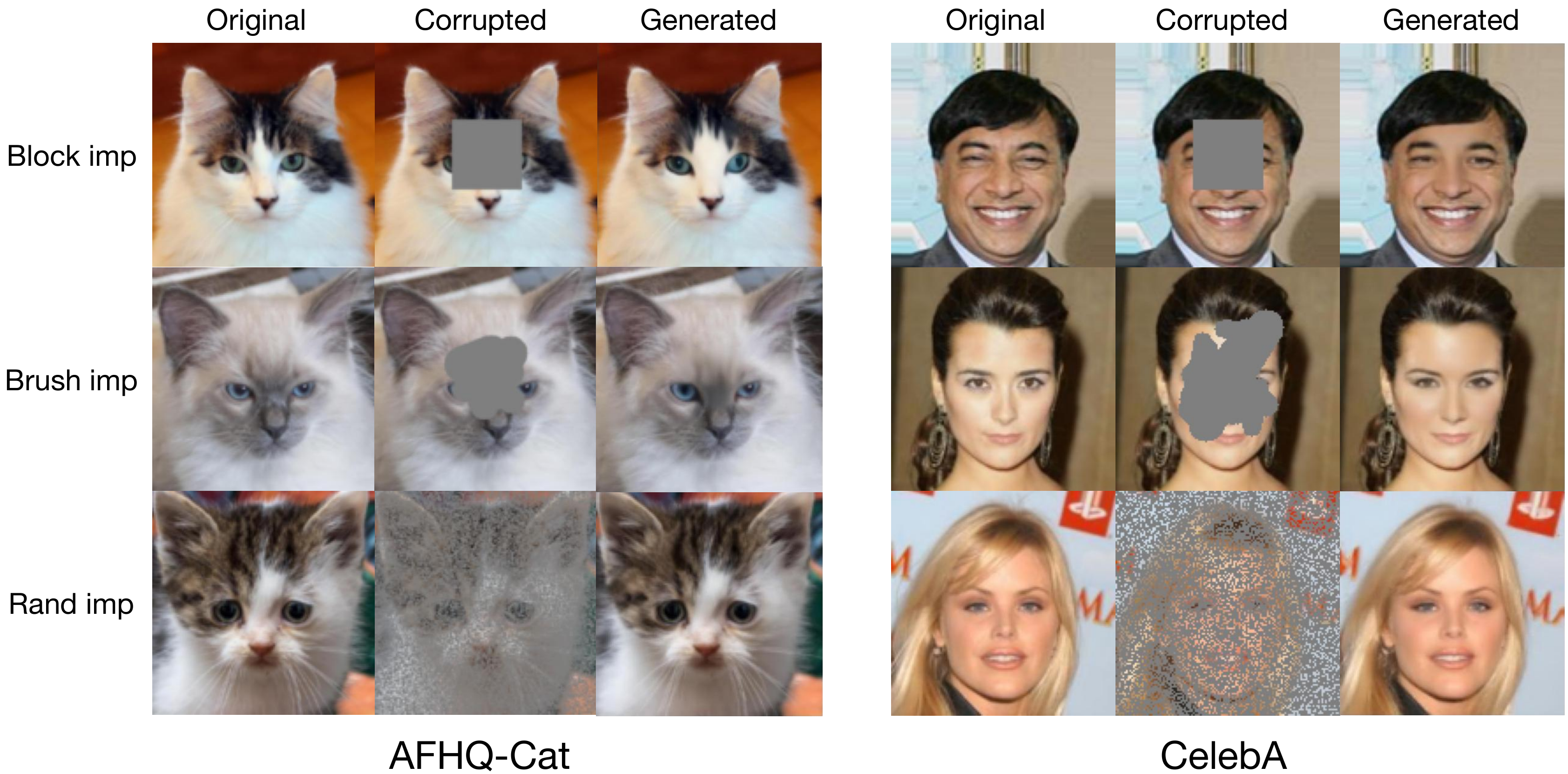}
    \caption{Inpainting using various masks \textbf{left panels}: AFHQ-Cat ($256\times256$).  \textbf{Right panels}: CelebA ($128\times128$). Fixing block and random corruptions are scored against related works in Table \ref{tab:inpainting_results}, showing competitive or superior performance in all metrics.}
    \label{fig:afhq_celeba}
\end{figure}
\subsection{Posterior sampling in the $\phi^4$-model}
\label{sec:phi4}

We apply our approach in the context of the $\phi^4$ model in $d = 2$ spacetime dimensions, a statistical lattice field theory where field configurations $\phi \in \mathbb{R}^{L \times L}$ represent the lattice state ($L$ denotes spatiotemporal extent)---for details see Appendix~\ref{app:phi_4}. This model poses sampling challenges due to its phase transition from disorder to full order, during which neighboring sites develop strong correlations in sign and magnitude \cite{vierhaus, albergo2019}. 

The $\phi^4$ model is specified by the following probability distribution
\begin{equation}
\label{eq:phi4:pdf}
    \mu(d\phi) = Z^{-1} e^{-E(\phi)} d\phi
\end{equation}
where $Z = \int_{\R^{L\times L}} e^{-E(\phi)} d\phi$ is a normalization constant and $E(\phi)$ is an energy function defined as
\begin{equation}
    \label{eq:energy:phi4}
        E(\phi)  =  \frac12\chi \sum_{a\sim b} |\phi(a) - \phi(b)|^2 +  \frac12\kappa  \sum_{a} |\phi(a)|^2  +  \frac14\gamma \sum_{a} |\phi(a)|^4,
\end{equation}
where $a,b\in[0,\ldots, L-1]^2$ denote the discrete positions on a $2$-dimensional lattice of size $L\times L$, $a\sim b$ denotes neighboring sites on the lattice, and we assume  periodic boundary conditions; $\chi>0$ , $\kappa \in \R$ and $\gamma >0$ are parameters. 
%
\begin{figure}
  \centering
    \includegraphics[width=0.7\linewidth]{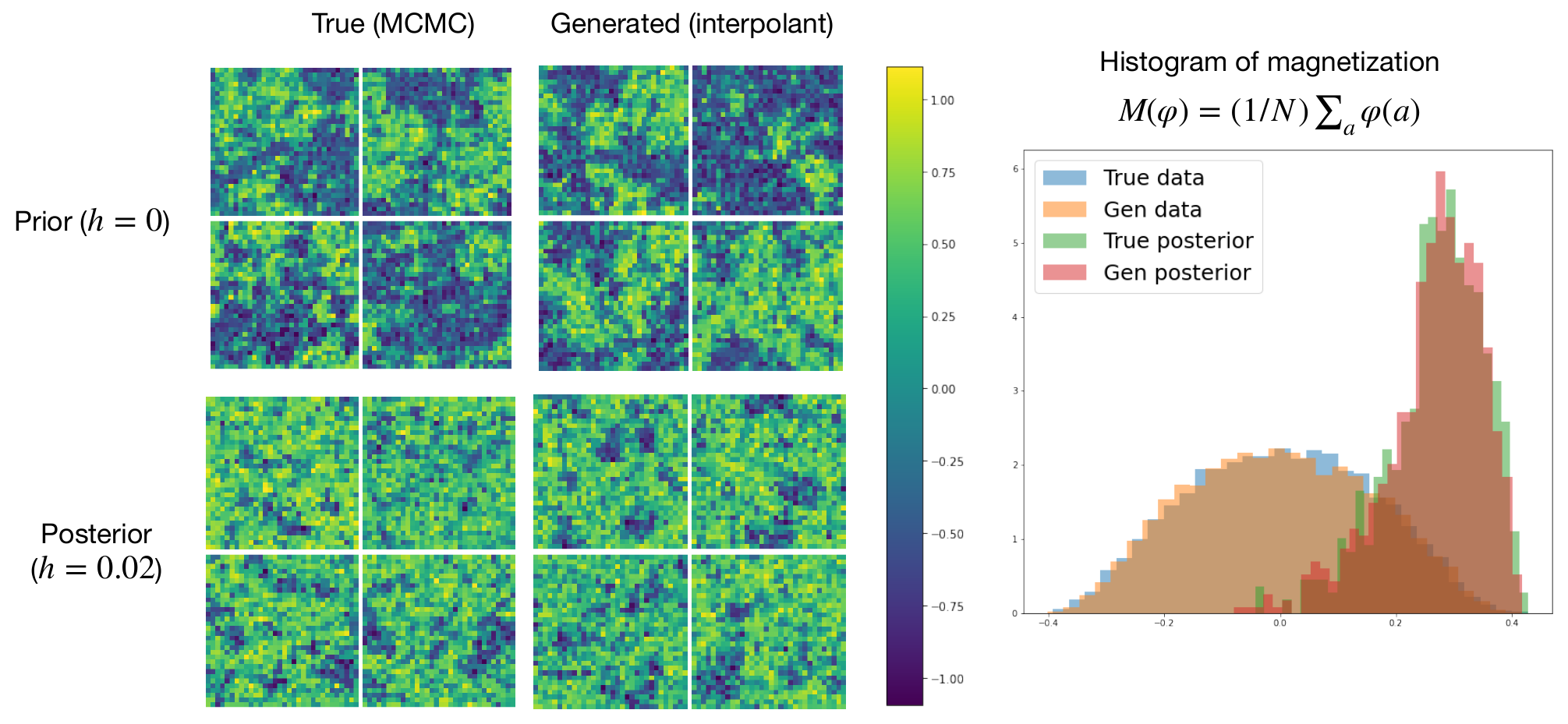}
    \caption{\textbf{Simulating a lattice $\phi^4$ theory.}  \textbf{Top left}: $L=32 \times L = 32$ lattice configurations at the phase transition. \textbf{Bottom left}: lattice examples with drift parameter $h=0.02$. \textbf{Top middle}: Generated lattice examples at phase transition. \textbf{Bottom middle}: generated lattice examples with field $h=0.02$. \textbf{Right}: magnetization of 2000 lattice configurations.}
    \label{fig:phi4}
\end{figure}
We perform MCMC simulations to generate configuration in a parameter range close to the phase transition. We use these data to learn a stochastic interpolant of the form~\eqref{eq:stoch:interp:hada} which allows us to perform unconditional generation of new field configurations as well arbitrary inpainting (conditional generation given partially observed configurations), as reported in Appendix~\ref{app:phi_4}. It also allows us to test the formalism of Section~\ref{sec:fine:tune} and perform sampling of the posterior distribution defined by adding a applied field $h\in \R$ to the energy, i.e. using
\begin{equation}
    \label{eq:energy:phi4:r}
        E_r(\phi)  =  E(\phi) - h \sum_{a}  \phi(a).
\end{equation}
The additional term plays the role of a reward. The results of the generation based on Proposition~\ref{thm:2eta} are shown on Figure~\ref{fig:phi4}, which indicate that our approach permits to valid sample configurations (as verified by their magnetization) of the posterior without retraining.

\subsection{Planning and decision making in a maze}
\label{sec:maze}

This section applies our framework to shortest path planning in maze environments, drawing from~\cite{janner2022planningdiffusionflexiblebehavior} and~\cite{chen2024diffusionforcingnexttokenprediction}. Using the Hadamard product interpolant~\eqref{eq:stoch:interp:hada}, we can impose that the paths pass through arbitrary locations in the maze (by setting $\alpha_i=0$ at these locations), reformulating planning as a zero-shot inpainting problem. Unlike traditional reinforcement learning approaches that generate paths sequentially through Markov decision processes, our method therefore produces entire trajectories simultaneously. It also avoid additional guiding terms like Monte Carlo guidance used in Diffusion Forcing~\citep{chen2024diffusionforcingnexttokenprediction}. 

For training, we use paths of length 300 randomly extracted from the trajectory of length 2,000,000 from~\cite{chen2024diffusionforcingnexttokenprediction}.   For simplicity, we subsample these paths every six points, creating sparse paths of length 50, from which we can recover paths of length 300 through linear interpolation between consecutive points. At inference, we perform zero-shot generation between any two  points in the maze by enforcing that the trajectory passes through these points: the length of the path between these locations can be varied by pinning the first point by setting $\alpha_1=0$, and the second point by setting $\alpha_i=0$ with a value of $i\in[2,50]$ that can be adjusted (see Appendix~\ref{app:maze} for details). Typical results are shown in Fig.~\ref{fig:path_partial_gen}. In terms of quality assessment, we check that the generated trajectories remain within allowed maze regions: all the 10,000 paths we generated between randomly chosen point pairs avoided the forbidden areas, demonstrating robust performance.



\begin{figure}[h!]
  \centering
    \includegraphics[width=0.8\linewidth]{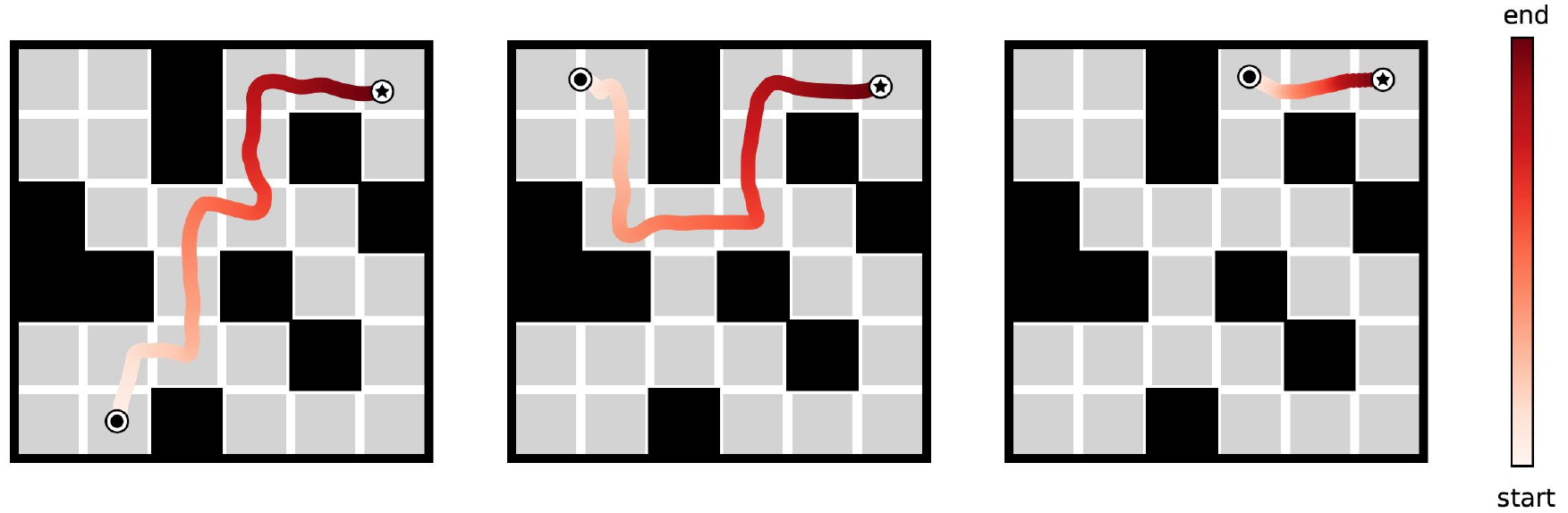}
    \caption{\textbf{One-shot generation of pathways between two arbitrary points in the maze.} The path length is  automatically tuned via a simple heuristic, see discussion in Appendix \ref{app:maze}.
    }
    \label{fig:path_partial_gen}
\end{figure}


\section{Conclusions}
\label{sec:conclusion}
In this paper, we have introduced operator-based interpolants, generalizing stochastic interpolants to enable truly multi-task generative modeling. By replacing scalar time variables with operators, we have created a unified framework that treats various generative tasks as different paths through the same underlying space.
Our contributions establish mathematical foundations for multi-task generation without task-specific training. This enables applications ranging from self-supervised inpainting to multichannel denoising, posterior sampling, and adaptive inference.

\paragraph{Limitations and future work.} Despite its promise, our approach increases the training complexity, requiring models to learn a larger space of possible generation paths. However, this space is not necessarily prohibitively large - when using Hadamard products, for example, it simply doubles the size of the input. Furthermore, this upfront training investment serves to amortize the flexibility gained across multiple tasks. While we argue this complexity can be addressed through scale, practical implementations must balance flexibility against computational efficiency.


Despite this limitation, operator-based interpolants represent a meaningful step toward more versatile generative modeling. By amortizing learning across multiple tasks and enabling post-training adaptation, this approach could reduce the costs associated with training separate task-specific models while opening new possibilities for adaptive generative systems.

\section*{Acknowledgements}
We would like to thank Yilun Du for many helpful discussions on the maze planning problem. MSA is supported by a Junior Fellowship at the Harvard Society of Fellows as well as the
National Science Foundation under Cooperative Agreement
PHY-2019786 (The NSF AI Institute for Artificial Intelligence and Fundamental Interactions, \href{http://iaifi.org/}{http://iaifi.org}).

\newpage
\bibliographystyle{unsrtnat}
\bibliography{paper}

@phdthesis{vierhaus,
author = {Vierhaus, Ingmar},
year = {2010},
month = {07},
pages = {},
title = {Simulation of $\phi^4$ theory in the strong coupling expansion beyond the Ising Limit},
school = {Humboldt University of Berlin}
}

@misc{janner2022planningdiffusionflexiblebehavior,
      title={Planning with Diffusion for Flexible Behavior Synthesis}, 
      author={Michael Janner and Yilun Du and Joshua B. Tenenbaum and Sergey Levine},
      year={2022},
      eprint={2205.09991},
      archivePrefix={arXiv},
      primaryClass={cs.LG},
      url={https://arxiv.org/abs/2205.09991}, 
}

@article{albergo2019,
  title = {Flow-based generative models for Markov chain Monte Carlo in lattice field theory},
  author = {Albergo, M. S. and Kanwar, G. and Shanahan, P. E.},
  journal = {Phys. Rev. D},
  volume = {100},
  issue = {3},
  pages = {034515},
  numpages = {13},
  year = {2019},
  month = {Aug},
  publisher = {American Physical Society},
  doi = {10.1103/PhysRevD.100.034515},
  url = {https://link.aps.org/doi/10.1103/PhysRevD.100.034515}
}

@misc{kingma2017adammethodstochasticoptimization,
      title={Adam: A Method for Stochastic Optimization}, 
      author={Diederik P. Kingma and Jimmy Ba},
      year={2017},
      eprint={1412.6980},
      archivePrefix={arXiv},
      primaryClass={cs.LG},
      url={https://arxiv.org/abs/1412.6980}, 
}

@article{DBLP:journals/corr/RonnebergerFB15,
  author       = {Olaf Ronneberger and
                  Philipp Fischer and
                  Thomas Brox},
  title        = {U-Net: Convolutional Networks for Biomedical Image Segmentation},
  journal      = {CoRR},
  volume       = {abs/1505.04597},
  year         = {2015},
  url          = {http://arxiv.org/abs/1505.04597},
  eprinttype    = {arXiv},
  eprint       = {1505.04597},
  timestamp    = {Mon, 13 Aug 2018 16:46:52 +0200},
  biburl       = {https://dblp.org/rec/journals/corr/RonnebergerFB15.bib},
  bibsource    = {dblp computer science bibliography, https://dblp.org}
}

@misc{chen2024diffusionforcingnexttokenprediction,
      title={Diffusion Forcing: Next-token Prediction Meets Full-Sequence Diffusion}, 
      author={Boyuan Chen and Diego Marti Monso and Yilun Du and Max Simchowitz and Russ Tedrake and Vincent Sitzmann},
      year={2024},
      eprint={2407.01392},
      archivePrefix={arXiv},
      primaryClass={cs.LG},
      url={https://arxiv.org/abs/2407.01392}, 
}

@book{williams1991,
  author    = {Williams, David},
  title     = {Probability with Martingales},
  publisher = {Cambridge University Press},
  year      = {1991},
  series    = {Cambridge Mathematical Textbooks},
  address   = {Cambridge},
  isbn      = {9780521406055},
  doi       = {10.1017/CBO9780511813658}
}

@article{albergo_stochastic_dd_2023,
	title = {Stochastic interpolants with data-dependent couplings},
	journal = {arXiv:2310.03725},
	author = {Albergo, Michael S. and Goldstein, Mark and Boffi, Nicholas M. and Ranganath, Rajesh and Vanden-Eijnden, Eric},
	year = {2023},
}

@article{song_generative_2020,
	title = {Generative {Modeling} by {Estimating} {Gradients} of the {Data} {Distribution}},
	language = {en},
	journal = {arXiv:1907.05600},
	author = {Song, Yang and Ermon, Stefano},
	year = {2020},
}

@article{albergo2023stochastic,
  title={Stochastic interpolants: A unifying framework for flows and diffusions},
  author={Albergo, Michael S and Boffi, Nicholas M and Vanden-Eijnden, Eric},
  journal={arXiv preprint arXiv:2303.08797},
  year={2023}
}

@inproceedings{albergo2022building,
  title={Building Normalizing Flows with Stochastic Interpolants},
  author={Albergo, Michael S and Vanden-Eijnden, Eric},
  booktitle={The Eleventh International Conference on Learning Representations},
  year={2022}
}

@inproceedings{liu2022flow,
  title={Flow Straight and Fast: Learning to Generate and Transfer Data with Rectified Flow},
  author={Liu, Xingchao and Gong, Chengyue and Liu, Qiang},
  booktitle={The Eleventh International Conference on Learning Representations},
  year={2022}
}

@inproceedings{lipman2022flow,
  title={Flow Matching for Generative Modeling},
  author={Lipman, Yaron and Chen, Ricky TQ and Ben-Hamu, Heli and Nickel, Maximilian and Le, Matthew},
  booktitle={The Eleventh International Conference on Learning Representations},
  year={2022}
}

@inproceedings{ho2020denoising,
  title={Denoising diffusion probabilistic models},
  author={Ho, Jonathan and Jain, Ajay and Abbeel, Pieter},
  booktitle={Advances in neural information processing systems},
  volume={33},
  pages={6840--6851},
  year={2020}
}

@inproceedings{rombach2022high,
  title={High-resolution image synthesis with latent diffusion models},
  author={Rombach, Robin and Blattmann, Andreas and Lorenz, Dominik and Esser, Patrick and Ommer, Bj{\"o}rn},
  booktitle={Proceedings of the IEEE/CVF conference on computer vision and pattern recognition},
  pages={10684--10695},
  year={2022}
}

@article{song2020score,
  title={Score-based generative modeling through stochastic differential equations},
  author={Song, Yang and Sohl-Dickstein, Jascha and Kingma, Diederik P and Kumar, Abhishek and Ermon, Stefano and Poole, Ben},
  journal={arXiv:2011.13456},
  year={2020}
}

@inproceedings{debortoli2021diffusion,
  title={Diffusion schr{\"o}dinger bridge with applications to score-based generative modeling},
  author={De Bortoli, Valentin and Thornton, James and Heng, Jeremy and Doucet, Arnaud},
  booktitle={Advances in Neural Information Processing Systems},
  volume={34},
  pages={17695--17709},
  year={2021}
}

@misc{tian2024visual,
      title={Visual Autoregressive Modeling: Scalable Image Generation via Next-Scale Prediction}, 
      author={Keyu Tian and Yi Jiang and Zehuan Yuan and Bingyue Peng and Liwei Wang},
      year={2024},
      eprint={2404.02905},
      archivePrefix={arXiv},
      primaryClass={cs.CV},
      url={https://arxiv.org/abs/2404.02905}, 
}

@misc{li2025fractal,
      title={Fractal Generative Models}, 
      author={Tianhong Li and Qinyi Sun and Lijie Fan and Kaiming He},
      year={2025},
      eprint={2502.17437},
      archivePrefix={arXiv},
      primaryClass={cs.LG},
      url={https://arxiv.org/abs/2502.17437}, 
}

@article{pereyra2015survey,
  title={A survey of stochastic simulation and optimization methods in signal processing},
  author={Pereyra, Marcelo and Schniter, Philip and Chouzenoux, Emilie and Pesquet, Jean-Christophe and Tourneret, Jean-Yves and Hero, Alfred O and McLaughlin, Steve},
  journal={IEEE Journal of Selected Topics in Signal Processing},
  volume={10},
  number={2},
  pages={224--241},
  year={2015},
  publisher={IEEE}
}

@inproceedings{chung2023diffusion,                    
 title={Diffusion Posterior Sampling for General Noisy Inverse Problems},                    
 author={Hyungjin Chung and Jeongsol Kim and Michael Thompson Mccann and Marc Louis Klasky and Jong Chul Ye},                    
 booktitle={International Conference on Learning Representations},                    
 year={2023},                    
 url={https://openreview.net/forum?id=OnD9zGAGT0k}                    
}

@inproceedings{song2022solving,
  title={Solving Inverse Problems in Medical Imaging with Score-Based Generative Models},
  author={Yang Song and Liyue Shen and Lei Xing and Stefano Ermon},
  booktitle={International Conference on Learning Representations},
  year={2022},
  url={https://openreview.net/forum?id=vaRCHVj0uGI}
}

@article{coeurdoux2024plug,
  title={Plug-and-play split Gibbs sampler: embedding deep generative priors in Bayesian inference},
  author={Coeurdoux, Florentin and Dobigeon, Nicolas and Chainais, Pierre},
  journal={IEEE Transactions on Image Processing},
  year={2024},
  publisher={IEEE}
}

@article{sun2024provable,
  title={Provable probabilistic imaging using score-based generative priors},
  author={Sun, Yu and Wu, Zihui and Chen, Yifan and Feng, Berthy T and Bouman, Katherine L},
  journal={IEEE Transactions on Computational Imaging},
  year={2024},
  publisher={IEEE}
}

@article{mardani2023variational,
  title={A Variational Perspective on Solving Inverse Problems with Diffusion Models},
  author={Mardani, Morteza and Song, Jiaming and Kautz, Jan and Vahdat, Arash},
  journal={arXiv preprint arXiv:2305.04391},
  year={2023}
}

@inproceedings{wang2024dmplug,
    title={{DMP}lug: A Plug-in Method for Solving Inverse Problems with Diffusion Models},
    author={Hengkang Wang and Xu Zhang and Taihui Li and Yuxiang Wan and Tiancong Chen and Ju Sun},
    booktitle={The Thirty-eighth Annual Conference on Neural Information Processing Systems},
    year={2024},
}

@misc{bansal2022cold,
      title={Cold Diffusion: Inverting Arbitrary Image Transforms Without Noise}, 
      author={Arpit Bansal and Eitan Borgnia and Hong-Min Chu and Jie S. Li and Hamid Kazemi and Furong Huang and Micah Goldblum and Jonas Geiping and Tom Goldstein},
      year={2022},
      eprint={2208.09392},
      archivePrefix={arXiv},
      primaryClass={cs.CV},
      url={https://arxiv.org/abs/2208.09392}, 
}

@misc{shi2025simplified,
      title={Simplified and Generalized Masked Diffusion for Discrete Data}, 
      author={Jiaxin Shi and Kehang Han and Zhe Wang and Arnaud Doucet and Michalis K. Titsias},
      year={2025},
      eprint={2406.04329},
      archivePrefix={arXiv},
      primaryClass={cs.LG},
      url={https://arxiv.org/abs/2406.04329}, 
}

@misc{kim2025train,
      title={Train for the Worst, Plan for the Best: Understanding Token Ordering in Masked Diffusions}, 
      author={Jaeyeon Kim and Kulin Shah and Vasilis Kontonis and Sham Kakade and Sitan Chen},
      year={2025},
      eprint={2502.06768},
      archivePrefix={arXiv},
      primaryClass={cs.LG},
      url={https://arxiv.org/abs/2502.06768}, 
}

@inproceedings{maze2023diffusion,
  title={Diffusion models beat gans on topology optimization},
  author={Maz{\'e}, Fran{\c{c}}ois and Ahmed, Faez},
  booktitle={Proceedings of the AAAI conference on artificial intelligence},
  volume={37},
  pages={9108--9116},
  year={2023}
}

@article{alverson2024generative,
  title={Generative adversarial networks and diffusion models in material discovery},
  author={Alverson, Michael and Baird, Sterling G and Murdock, Ryan and Johnson, Jeremy and Sparks, Taylor D and others},
  journal={Digital Discovery},
  volume={3},
  number={1},
  pages={62--80},
  year={2024},
  publisher={Royal Society of Chemistry}
}

@article{kawar2022denoising,
  title={Denoising diffusion restoration models},
  author={Kawar, Bahjat and Elad, Michael and Ermon, Stefano and Song, Jiaming},
  journal={Advances in Neural Information Processing Systems},
  volume={35},
  pages={23593--23606},
  year={2022}
}

@inproceedings{alkan2023variational,
  title={Variational diffusion models for blind MRI inverse problems},
  author={Alkan, Cagan and Oscanoa, Julio and Abraham, Daniel and Gao, Mengze and Nurdinova, Aizada and Setsompop, Kawin and Pauly, John M and Mardani, Morteza and Vasanawala, Shreyas},
  booktitle={NeurIPS 2023 workshop on deep learning and inverse problems},
  year={2023}
}

@article{daras2022score,
  title={Score-guided intermediate layer optimization: Fast Langevin mixing for inverse problems},
  author={Daras, Giannis and Dagan, Yuval and Dimakis, Alexandros G and Daskalakis, Constantinos},
  journal={arXiv preprint arXiv:2206.09104},
  year={2022}
}

@article{
    pokle2024trainingfree,
    title={Training-free linear image inverses via flows},
    author={Ashwini Pokle and Matthew J. Muckley and Ricky T. Q. Chen and Brian Karrer},
    journal={Transactions on Machine Learning Research},
    issn={2835-8856},
    year={2024},
}

@InProceedings{hamu2024dflow,
  title = 	 {D-Flow: Differentiating through Flows for Controlled Generation},
  author =       {Ben-Hamu, Heli and Puny, Omri and Gat, Itai and Karrer, Brian and Singer, Uriel and Lipman, Yaron},
  booktitle = 	 {Proceedings of the 41st International Conference on Machine Learning},
  pages = 	 {3462--3483},
  year = 	 {2024},
  volume = 	 {235},
  series = 	 {Proceedings of Machine Learning Research},
  publisher =    {PMLR},
}

@article{zhang2024flow,
  title={Flow priors for linear inverse problems via iterative corrupted trajectory matching},
  author={Zhang, Yasi and Yu, Peiyu and Zhu, Yaxuan and Chang, Yingshan and Gao, Feng and Wu, Ying Nian and Leong, Oscar},
  journal={Advances in Neural Information Processing Systems},
  volume={37},
  pages={57389--57417},
  year={2024}
}

@inproceedings{martin2024pnp,
  title={Pnp-flow: Plug-and-play image restoration with flow matching},
  author={Martin, S{\'e}gol{\`e}ne and Gagneux, Anne and Hagemann, Paul and Steidl, Gabriele},
  booktitle={International Conference on Learning Representations},
  year={2025}
}

\appendix

\section{Proofs}
\label{app:proofs}

\driftdef*

\driftlem*

\begin{proof}
    The lemma is a simple consequence of the $L^2$ characterization of the conditional expectation as least-squares-best predictor, see e.g. Section 9.3 in~\cite{williams1991}. 
\end{proof}

\score*

\begin{proof}
    The lemma follows from Stein's lemma (aka Gaussian integration by parts formula) that asserts that
    \begin{equation}
        \label{eq:stein}
        \E[x_0|I(\alpha,\beta,x)=x] = -\alpha s_{\alpha,\beta}(x)  
    \end{equation}
    as well as the definition of $\eta_0(\alpha,\beta,x)$ in~\eqref{eq:2eta}.
\end{proof}
\pflow*

\begin{proof}
    From the framework of standard stochastic interpolants~\cite{albergo2022building,albergo2023stochastic}, we know that the law of $I_t = I(\alpha_t,\beta_t)$ is the same for all $t\in[0,1]$ as the law of $X_t$, i.e. the solution to the probability flow ODE
    \begin{equation}
        \label{eq:p:flow:standrad}
        \dot X_t = b_t(X_t), \qquad X_{t=0} \overset{d}{=} I_{t=0},
    \end{equation}
    where 
    \begin{equation}
    \label{eq:bt}
    b_t(x) = \E[ \dot I_t | I_t = x].
    \end{equation}
    By the chain rule $\dot I_t = \dot \alpha_t x_0 + \dot \beta_t x_1$ so that
    \begin{equation}
    \label{eq:bt:ab}
    b_t(x) = \dot \alpha_t \E[ x_0 | I_t = x] + \dot \alpha_t \E[ x_1 | I_t = x] \equiv \dot \alpha_t \eta_0(\alpha_t,\beta_t,x) + \dot \beta_t \eta_1(\alpha_t,\beta_t,x).
    \end{equation}
    where $\eta_{0,1}$ are the drifts defined in~\eqref{eq:2eta}. This means that \eqref{eq:p:flow:standrad} is \eqref{eq:ode}.
\end{proof}

\diff*

\begin{proof}
    From the framework of standard stochastic interpolants~\cite{albergo2022building,albergo2023stochastic}, we know that the law of the solution $X_t$  to the probability flow ODE~\eqref{eq:p:flow:standrad} is the same for all $t\in[0,1]$ as the law of $X^\epsilon_t$ solution to to SDE
    \begin{equation}
        \label{eq:sde:standrad}
        dX^\epsilon_t = b_t(X_t) dt + \epsilon_t s_t(X^\epsilon_t) dt + \sqrt{2\epsilon_t } dW_t, \qquad X^\epsilon_{t=0},
        \overset{d}{=} I_{t=0},
    \end{equation}
    where $s_t(x)$ is the score of the probability density function of $I_t=I(\alpha_t,\beta_t)$. Since $s_t(x) = s_{\alpha_t,\beta_t}(x)$, from Lemma~\ref{lem:score}, we have 
    \begin{equation}
    \label{eq:st}
        s_t(x) = - \alpha_t^{-1} \eta_0(\alpha_t,\beta_t,x).
    \end{equation}
    If we insert this expression in~\eqref{eq:sde:standrad} and use~\eqref{eq:bt:ab}, we see that this SDE reduces to~\eqref{eq:sde}.
\end{proof}

\twoeta*

We will prove this proposition as a corollary of: 

\begin{pinkbox}
\begin{restatable}[Posterior distributions]{proposition}{twomu}
    \label{thm:2mu}
    Let $\mu_{\alpha,\beta}$ and $\mu^r_{\alpha,\beta}$ be the probability distributions of the stochastic interpolants defined in~\eqref{eq:I:prior} and~\eqref{eq:I:posterior}, respectively. Assume that $\alpha$ and $\beta$ are invertible, that the equations~\eqref{eq:ab} for $\alpha_r$, $\beta_r$ in Proposition~\ref{thm:2eta} have a solution, and  that $x_r$ is given by \eqref{eq:xr}. Then these distributions are related, up to a constant independent of $x$ and $x_r$, as
    \begin{equation}
        \label{eq:2mu}
        \mu^r_{\alpha,\beta}(dx) = |\alpha_r||\alpha|^{-1} e^{R(\alpha,\beta,x)}\mu_{\alpha_r,\beta_r}(dx_r),
    \end{equation}
    where  
    \begin{align}
        \label{eq:R}
        R(\alpha,\beta,x) = \tfrac12 |\alpha_r^{-1}x_r|^2 - \tfrac12 |\alpha^{-1}x|^2.
        \end{align}
\end{restatable}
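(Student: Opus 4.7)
The plan is to compute the density of $\mu^r_{\alpha,\beta}$ by writing the interpolant $I_r(\alpha,\beta)=\alpha x_0+\beta x_1^r$ as a Gaussian convolution conditioned on $x_1^r$, then use the quadratic structure of the reward $r$ to complete the square inside the integrand. First I would start from
\begin{equation*}
    \mu^r_{\alpha,\beta}(x) = |\alpha|^{-1}(2\pi)^{-d/2} Z^{-1} \int e^{r(x_1)-\tfrac12|\alpha^{-1}(x-\beta x_1)|^2}\,\mu_1(dx_1),
\end{equation*}
obtained by integrating out the Gaussian $x_0$ via the change of variables $x_0=\alpha^{-1}(x-\beta x_1)$ (this is where invertibility of $\alpha$ is used), and noting the analogous formula for $\mu_{\alpha_r,\beta_r}(x_r)$ with the reward removed.

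Next I would expand the exponent as a quadratic in $x_1$. Plugging in $r(x_1)=\tfrac12\langle x_1,Ax_1\rangle+\langle b,x_1\rangle$ and expanding the Gaussian term gives, after collecting,
\begin{equation*}
    -\tfrac12|\alpha^{-1}x|^2 + x_1^T\bigl(\beta^T\alpha^{-T}\alpha^{-1}x+b\bigr) - \tfrac12 x_1^T\bigl(\beta^T\alpha^{-T}\alpha^{-1}\beta - A\bigr)x_1.
\end{equation*}
The strategy is then to match this with $R(\alpha,\beta,x) - \tfrac12|\alpha_r^{-1}(x_r-\beta_r x_1)|^2$, whose expansion in $x_1$ reads
\begin{equation*}
    R - \tfrac12|\alpha_r^{-1}x_r|^2 + x_1^T\beta_r^T\alpha_r^{-T}\alpha_r^{-1}x_r - \tfrac12 x_1^T \beta_r^T\alpha_r^{-T}\alpha_r^{-1}\beta_r\, x_1.
\end{equation*}
Matching the quadratic coefficient yields exactly \eqref{eq:ab}, matching the linear coefficient and solving (using invertibility of $\beta_r$) yields \eqref{eq:xr}, and matching the $x_1$-independent part forces $R=\tfrac12|\alpha_r^{-1}x_r|^2-\tfrac12|\alpha^{-1}x|^2$, which is \eqref{eq:R}.

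Once the exponents agree, the integral in the formula for $\mu^r_{\alpha,\beta}(x)$ becomes $(2\pi)^{d/2}|\alpha_r|\,\mu_{\alpha_r,\beta_r}(x_r)$, so that $\mu^r_{\alpha,\beta}(x) = Z^{-1}|\alpha|^{-1}|\alpha_r|\,e^{R(\alpha,\beta,x)}\mu_{\alpha_r,\beta_r}(x_r)$. Converting densities to measures and noting that $x_r$ depends affinely on $x$ by \eqref{eq:xr}, the Jacobian $|dx_r/dx|$ is an $x$-independent constant that can be absorbed, together with $Z^{-1}$, into the overall normalization; this gives \eqref{eq:2mu} up to a constant independent of $x$ and $x_r$ as claimed.

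The main obstacle is the algebraic bookkeeping when $\alpha,\beta,\alpha_r,\beta_r$ are not assumed symmetric: one must carry the transposes carefully and verify that $\beta^T\alpha^{-T}\alpha^{-1}\beta - A$ is symmetric positive definite so that \eqref{eq:ab} is a genuine factorization equation (this is where $A$ being negative definite is used), and then read off $x_r$ by inverting $\beta_r^T\alpha_r^{-T}\alpha_r^{-1}$ using invertibility of both $\alpha_r$ and $\beta_r$. Proposition~\ref{thm:2eta} then follows as a corollary: applying $\E[x_1^r\mid I_r=x]$ and $\E[x_0\mid I_r=x]$ to \eqref{eq:2mu} and using the tower property with the change of variable $x\leftrightarrow x_r$ gives \eqref{eq:rekate:drift1:r} directly, while \eqref{eq:rekate:drift0:r} is obtained from it via the linear constraint \eqref{eq:linrel} applied at both $(\alpha,\beta,x)$ and $(\alpha_r,\beta_r,x_r)$.
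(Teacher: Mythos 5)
Your proof is correct and takes essentially the same route as the paper's: integrate out the Gaussian $x_0$ via the change of variables $x=\alpha x_0+\beta x_1$ (Jacobian $|\alpha|^{-1}$), expand the resulting exponent as a quadratic in $x_1$, and match the order-$2$, order-$1$, and order-$0$ coefficients to obtain \eqref{eq:ab}, \eqref{eq:xr}, and \eqref{eq:R}, exactly as in the paper's matching condition \eqref{eq:require}. The only cosmetic difference is that you phrase the argument with densities and track $Z^{-1}$ and the affine Jacobian $|dx_r/dx|$ explicitly, whereas the paper runs the identical computation against test functions; both are absorbed into the same ``constant independent of $x$ and $x_r$.''
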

\end{pinkbox}

\begin{proof}
    By definition of the probability distribution  $\mu_{\alpha,\beta}^r(dx)$ of $I_r(\alpha,\beta)$, given any integrable and bounded test function $\phi:\R^d \to \R$ we have
    \begin{equation}
        \label{eq:def:mur}
        \begin{aligned}
            \int_{\R^d} \phi(x) \mu^r_{\alpha,\beta}(dx) & =  \E[\phi(I_r(\alpha,\beta))]\\
            & = \int_{\R^d\times \R^d} \phi(\alpha x_0 + \beta x_1) (2\pi)^{-d/2} e^{-\frac12 |x_0|^2 } dx_0 e^{r(x_1)}\mu_1(dx_1) 
        \end{aligned}
    \end{equation}
    If instead of $x_0$ we use as new integration variable  $x=\alpha x_0 + \beta x_1$, this becomes
    \begin{equation}
        \label{eq:def:mur:2}
        \begin{aligned}
            \int_{\R^d} \phi(x) \mu^r_{\alpha,\beta}(dx) & = |\alpha|^{-1} \int_{\R^d\times \R^d} \phi(x) (2\pi)^{-d/2} e^{-\frac12 |\alpha^{-1}(x-\beta x_1)|^2 } dx e^{r(x_1)}\mu_1(dx_1).
        \end{aligned}
    \end{equation}
    Similarly, for the probability distribution $\mu_{\alpha,\beta}(dx)$ of $I(\alpha,\beta)$, we have
    \begin{equation}
        \label{eq:def:mu:2}
        \begin{aligned}
            \int_{\R^d} \phi(x) \mu_{\alpha,\beta}(dx) & = |\alpha|^{-1} \int_{\R^d\times \R^d} \phi(x) (2\pi)^{-d/2} e^{-\frac12 |\alpha^{-1}(x-\beta x_1)|^2 } dx \mu_1(dx_1) 
        \end{aligned}
    \end{equation}
    If in this equation we replace $\alpha$ by $\alpha_r$, $\beta$ by $\beta_r$,  $x$ by $x_r$, and $\phi(x)$ by $\phi(x) e^{R(\alpha,\beta,x)}$, and multiply both side by $|\alpha_r|/|\alpha|$ it becomes:
    \begin{equation}
        \label{eq:def:mu:3}
        \begin{aligned}
            &|\alpha_r||\alpha|^{-1}\int_{\R^d} \phi(x) e^{R(\alpha,\beta,x)} \mu_{\alpha_r,\beta_r}(dx_r) \\
            & = |\alpha|^{-1} \int_{\R^d\times \R^d} \phi(x) e^{R(\alpha,\beta,x)} (2\pi)^{-d/2} e^{-\frac12 |\alpha_r^{-1}(x-\beta_r x_1)|^2 } dx_r \mu_1(dx_1).
        \end{aligned}
    \end{equation}
    We can now require that the right hand side of~\eqref{eq:def:mu:3} be the same as at the right hand-side of~\eqref{eq:def:mur:2} (so that, $\mu^r_{\alpha,\beta}(dx) = |\alpha_r||\alpha|^{-1} e^{R(\alpha,\beta, x)} \mu_{\alpha_r,\beta_r}(dx_r)$),  we arrive at the requirement that  
    \begin{equation}
        \label{eq:require}
        -\frac12 |\alpha_r^{-1}(x_r-\beta_r x_1)|^2 + R(\alpha,\beta,x) = -\frac12 |\alpha^{-1}(x-\beta x_1)|^2 +\frac12 \langle x_1, A x_1\rangle + \langle b,x_1\rangle,
    \end{equation}
    where we used $r(x) =  \tfrac12 \langle x, A x\rangle + \langle b,x\rangle$. Since~\eqref{eq:require} must hold for all $x_1$, we can expand both sides of this equation, and equate the coefficient of order $2$, $1$ and $0$ in $x_1$. 
    They are completely equivalent to ~\eqref{eq:ab}, ~\eqref{eq:xr}, and ~\eqref{eq:R}, respectively. So as long as we can find solutions to~\eqref{eq:ab}, \eqref{eq:2mu} holds.
\end{proof}

\begin{proof}[Proof of Proposition~\ref{thm:2eta}]
    By definition of the conditional expectation, we have
    \begin{align}
        \eta_1(\alpha,\beta,x) = \frac{\int_{\R^d} x_1 e^{-\frac12 |\alpha^{-1}(x-\beta x_1)|^2 } \mu_1(dx_1) }{\int_{\R^d} e^{-\frac12 |\alpha^{-1}(x-\beta x_1)|^2 } \mu_1(dx_1)},\\
        \eta^r_1(\alpha,\beta,x) = \frac{\int_{\R^d} x_1 e^{-\frac12 |\alpha^{-1}(x-\beta x_1)|^2 }  e^{r(x_1)}\mu_1(dx_1) }{\int_{\R^d} e^{-\frac12 |\alpha^{-1}(x-\beta x_1)|^2 } e^{r(x_1)} \mu_1(dx_1)}.
    \end{align}
    If in the first equality we replace $\alpha$ by $\alpha_r$, $\beta$ by $\beta_r$, and $x$ by $x_r$, and assume that $\alpha_r$, $\beta_r$, and $x_r$ satisfy~\eqref{eq:require}, by construction we obtain that~\eqref{eq:rekate:drift1:r} holds. To get \eqref{eq:rekate:drift0:r}, use ~\eqref{eq:rekate:drift1:r} as well as relation~\eqref{eq:linrel} twice to deduce 
    \begin{equation}
        \begin{aligned}
            \eta_0^r(\alpha,\beta,x) &= \alpha^{-1}\big( x - \beta \eta_1^r(x,\alpha,\beta)\big)\\
            & =  \alpha^{-1}\big( x - \beta \eta_1(x_r,\alpha_r,\beta_r)\big)\\
            & =  \alpha^{-1}\big( x - \beta \beta_r^{-1}(x_r - \alpha_r \eta_0(x_r,\alpha_r,\beta_r)\big)\\
            & = \alpha^{-1}\beta \beta_r^{-1} \alpha_r \eta_0(x_r,\alpha_r,\beta_r) + \alpha^{-1}\big( x - \beta \beta_r^{-1}x_r\big).
        \end{aligned}
    \end{equation}
\end{proof}

\section{Experimental details}
\label{app:experiment}

Details for the experiments in Section \ref{sec:num} are provided here. 

\subsection{Multitask inpainting and sequential generation}
\label{app:inpaint} 

For all image generation experiments, the U-Net architecture originally proposed in \cite{ho2020denoising} is used. The specification of architecture hyperparameters as well as training hyperparameters are given in Table \ref{tab:archs:img}. Training was done for 200 epochs on batches comprised of 30 draws from the target, and 50 time slices. The objectives given in \ref{eq:stoch:interp:vel:0} and \ref{eq:stoch:interp:vel:1} were optimized using the Adam optimizer. The learning rate was set to .0001 and was dropped by a factor of 2 every 1500 iterations of training. To integrate the ODE/SDE when drawing samples, we used a simple Euler integrator. 

In order to progressively explore the space of the hypercube of $\alpha$ and $\beta$, we first learn a model in the diagonal of the hypercube, i.e where all entries of $\alpha$ are all the same value. We then fine-tune the first model for matrices $\alpha_t$ uniformly distributed in $[0,1]^d$. We also fine-tune the first model for matrices $\alpha_t$ decomposed by blocks of $4 \times 4$ where entries of each blocks contains the same values in $[0,1]^d$.

\begin{table}[ht]
\centering
\begin{tabular}{lcccc}
\toprule &  MNIST & $\phi_4$  &  CelebA & AFHQ-Cat \\
\midrule Dimension & 28$\times$28 & 32$\times$32 & 128$\times$128 & 256$\times$256 \\

\# Training point &  60,000  & 100,000 &  190,000  & 5,000 \\
\midrule Batch Size & 50 & 100 & 128 & 64 \\
Training Steps  & 4$\times$10$^5$  & 2 $\times$ 10$^5$ & 4 $\times$ 10$^5$ & 4 $\times$ 10$^5$  \\
Attention Resolution  & 64 & 64 & 64 & 64 \\
Learning Rate (LR) & $0.0002$ & $0.0002$ & $0.0001$ & $0.0001$\\
LR decay (1k epochs) & 0.995 & 0.995 & 0.995 & 0.995  \\
U-Net dim mult & [1,2,2,2] & [1,2,2,2] & [1,2,2,2] & [1,2,2,2] \\
Learned $t$ embedding & Yes & Yes & Yes & Yes \\
$\#$ GPUs & 1 & 1 & 4 & 4\\
\bottomrule
\end{tabular}
\caption{Hyperparameters and architecture for MNIST, $\phi_4$ and maze datasets.}
\label{tab:archs:img}
\end{table}

\begin{figure}[t]
  \centering
  \begin{center}
    \includegraphics[width=0.6\linewidth]{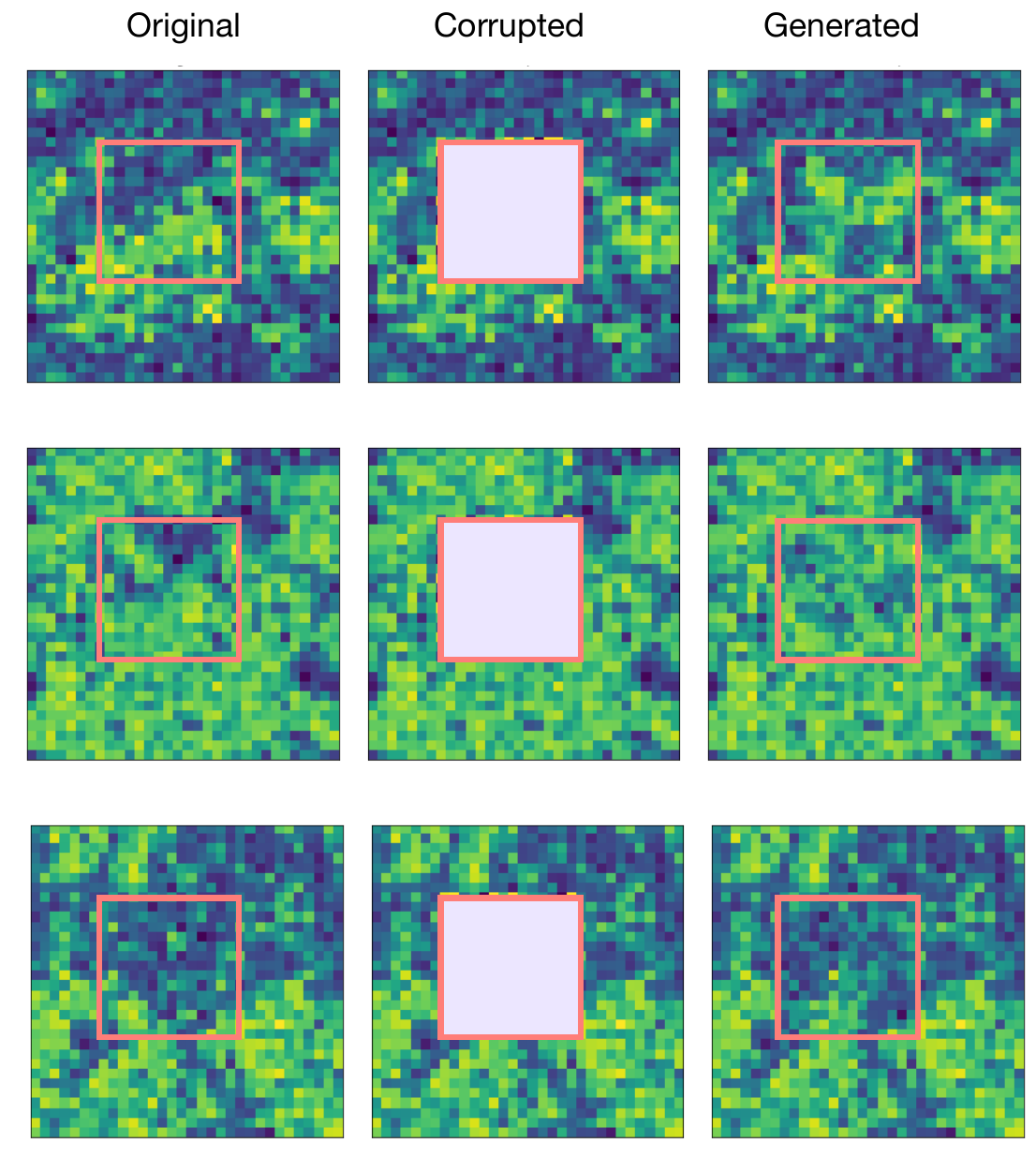}
  \end{center}\hfill
    \captionof{figure}{\textbf{$\phi^4$-model:} Inpainting of three different configurations.}
    \label{fig:phi4-inpaint}
\end{figure}

\subsection{Details about the $\phi^4$ Model}
\label{app:phi_4}

We define the discrete Fourier transform as
\begin{equation}
    \label{eq:fourier}
    \hat \phi (k ) = L^{-d/2} \sum_a e^{2i\pi k\cdot a /L } \phi(a)\quad \Leftrightarrow \quad \phi (a ) = L^{-d/2} \sum_k e^{-2i\pi k\cdot a /L } \hat \phi(k),
\end{equation}
where $a,k \in [0,\ldots, L-1]^d$, we can write the energy \eqref{eq:energy:phi4} as $E(\phi) = E_0(\phi) + U(\phi)$ with 
\begin{equation}
    \label{eq:energy:phi4:0:F}
    E_0(\phi) = \hat E_0(\hat \phi) \equiv\frac12\sum_{k} \hat M(k)|\hat \phi(k)|^2, \qquad \hat M(k) = 2\alpha \left(d-\sum_{\hat e }\cos (2\pi k\cdot \hat e /L)\right) +  \beta_0,
\end{equation}
where $\hat e$ denotes the $d$ basis vectors on the lattice and $\beta_0>0$ is an adjustable parameter; and  
\begin{equation}
    \label{eq:U:phi4}
    \begin{aligned}
        & U(\phi) = \frac12 (\kappa-\kappa_0) \sum_a |\phi(a)|^2 + \frac14 \gamma \sum_a  |\phi(a)|^4\\
        \implies  \ & \hat U(\hat \phi) = \frac12 (\kappa-\kappa_0) \sum_k |\hat\phi(k)|^2 + \frac14 \gamma \sum_a  \left|L^{-d/2} \sum_k e^{-2i\pi k\cdot a /L } \hat \phi(k)\right|^4.
    \end{aligned} 
\end{equation}
where $\phi$ and $\hat\phi$ are Fourier transform pairs as defined in \eqref{eq:fourier}: the last term can be implemented via $\sum_a (\text{ifft}(\hat \phi))^4(a)$.

\paragraph{Sampling using the Langevin SDE:} To obtain the ground-truth samples from the $\phi^4$ model, one option is to use the SDE
\begin{equation}
\label{eq:clang_phi_4}
    d \hat \phi_t(k)  = -\hat M(k) \hat \phi_t(k) dt - (\kappa-\kappa_0) \hat \phi_t(k) dt - \gamma \widehat{\phi_t^3} (k) dt+ \sqrt{2} d\hat W_t(k).
\end{equation}
where we denote
\begin{equation}
    \label{eq:pseudo}
    \widehat{\phi_t^3} (k) = L^{-d/2} \sum_a e^{2i\pi k\cdot a /L }\left(L^{-d/2} \sum_k e^{-2i\pi k\cdot a /L } \hat \phi_t(k)\right)^3 
\end{equation}
which can be implemented via $\text{fft}( (\text{ifft}(\hat\phi_t))^3)$. This SDE may be quite stiff, however, a problem that can be alleviated by changing the mobility and using instead
\begin{equation}
\label{eq:clang_phi_4:2}
    d \hat \phi_t(k)  = -\hat \phi_t(k) dt - (\kappa-\kappa_0) \hat M^{-1} (k) \hat \phi_t(k) dt  - \gamma \hat M^{-1} (k)\widehat{\phi_t^3} (k) dt + \sqrt{2} \hat M^{-1/2}(k) d\hat W_t(k).
\end{equation}
The discretized version of this equation reads
\begin{equation}
\label{eq:clang_phi4:d}
\begin{aligned}
    \hat \phi_{t_{n+1}}(k)  & = \hat \phi_{t_n}(k) - \mathit{\Delta}{t_n}  \left( \hat \phi_{t_n}(k) + (\kappa-\kappa_0) \hat M^{-1} (k) \hat \phi_{t_n}(k) +\gamma \hat M^{-1} (k)\widehat{\phi_{t_n}^3} (k) \right) \\
    &\quad + \sqrt{2\mathit{\Delta}{t_n}}\hat M^{-1/2}(k)  \hat\eta_n(k),     
\end{aligned}
\end{equation}
where $\hat \eta_n$ is the Fourier transform of $\eta_n \sim N(0,\text{Id})$.

\paragraph{Computing the generator of the posterior distribution}
As illustrated with \eqref{eq:energy:phi4:r}, one would like to sample a slightly different $\phi^4$ model with energy function, noted $E_r$. $E$ and $E_r$ define respectively the prior and posterior distribution, via the Boltzmann's law \eqref{eq:phi4:pdf}.
In this work, only relations of the form $E_r = E - \langle\phi, A \varphi\rangle - \langle h, \varphi\rangle$ are studied, combining a linear and quadratic term. $A$ is assumed to be definite negative.  

\begin{restatable}[]{lemma}{}
    \label{lem:relation}Assume that $\beta = 1 -\alpha$ and let  
    \begin{equation}
    \label{eq:eta:app}
        \eta(\alpha, x) = \E[ x_0-x_1| \alpha x_0 + (1-\alpha) x_1 = x] = \eta_0(\alpha, 1-\alpha, x) - \eta_1(\alpha, 1-\alpha, x)
    \end{equation}
    Then  
    \begin{align}
        \eta_0(\alpha, 1-\alpha, x) &= x  + (1-\alpha) \eta(\alpha, x) \label{eq:eta_0_alt},\\
        \eta_1(\alpha, 1-\alpha, x) &= x - \alpha \eta(\alpha, x). \label{eq:eta_1_alt}
    \end{align}
\end{restatable}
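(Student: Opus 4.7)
The plan is to reduce the statement to a small linear system in the two unknowns $\eta_0(\alpha,1-\alpha,x)$ and $\eta_1(\alpha,1-\alpha,x)$ and solve it directly. The two equations will come from (i) the general linear identity \eqref{eq:linrel} linking $x$, $\eta_0$ and $\eta_1$ for any admissible pair $(\alpha,\beta)$, and (ii) the very definition \eqref{eq:eta:app} of $\eta$. Nothing probabilistic is left to prove once those two identities are in hand; the whole argument is algebraic. In particular, I expect no real obstacle: since $\alpha$ is diagonal (equivalently, acts as a Hadamard product with a vector), the operators $\alpha$ and $1-\alpha$ commute with each other and with everything in sight, so no ordering issue arises.

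First I would specialize \eqref{eq:linrel} to the case $\beta = 1-\alpha$, which gives the first equation
\begin{equation*}
\alpha\, \eta_0(\alpha,1-\alpha,x) + (1-\alpha)\, \eta_1(\alpha,1-\alpha,x) = x.
\end{equation*}
Next, from the definition \eqref{eq:eta:app} I would read off the second equation
\begin{equation*}
\eta_0(\alpha,1-\alpha,x) - \eta_1(\alpha,1-\alpha,x) = \eta(\alpha,x).
\end{equation*}

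Finally I would solve this $2\times 2$ system. Writing $e_0 = \eta_0(\alpha,1-\alpha,x)$ and $e_1 = \eta_1(\alpha,1-\alpha,x)$, the second equation yields $e_0 = e_1 + \eta(\alpha,x)$; substituting into the first and simplifying via $\alpha + (1-\alpha) = 1$ gives $e_1 = x - \alpha\,\eta(\alpha,x)$, which is exactly \eqref{eq:eta_1_alt}. Adding $\eta(\alpha,x)$ to both sides recovers $e_0 = x + (1-\alpha)\,\eta(\alpha,x)$, which is \eqref{eq:eta_0_alt}, completing the proof. The only thing worth double-checking is that \eqref{eq:linrel} applies here: in the Hadamard setting, $\alpha$ is a diagonal operator (entries in $[0,1]$) and $1-\alpha$ is its complement, so invertibility is not required for this identity --- it follows directly from taking conditional expectation of $I(\alpha,1-\alpha) = \alpha\odot x_0 + (1-\alpha)\odot x_1$.
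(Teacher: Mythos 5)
Your proof is correct and follows exactly the paper's argument: the paper also proves this by solving the linear system formed by \eqref{eq:linrel} (specialized to $\beta=1-\alpha$) and the definition \eqref{eq:eta:app} for $(\eta_0,\eta_1)$. Your added remark that \eqref{eq:linrel} holds here without invertibility (by taking conditional expectations of $I(\alpha,1-\alpha)$) is a fair clarification but does not change the route.
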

\begin{proof}[Proof of lemma~\ref{lem:relation}]
    Solving \eqref{eq:linrel} and \eqref{eq:eta:app} in $(\eta_0,\eta_1)$ gives \eqref{eq:eta_0_alt} and \eqref{eq:eta_1_alt}.
\end{proof}
The idea is to use the drift of the prior to sample from the posterior. The following proposition makes it possible.
\begin{pinkbox}
\begin{restatable}[Posterior drift]{proposition}{eta_r}
    \label{thm:eta_r}
    Assume $\beta = 1 - \alpha$ with $\alpha$ diagonal and invertible, and let 
    \begin{equation}
    \label{eq:etar:app}
        \eta^r(\alpha,x) = \E\big[x_0-x_1|\alpha x_0 + (1-\alpha)  x_1^r = x \big] = \eta_0^r(\alpha, 1-\alpha, x) - \eta_1^r(\alpha, 1-\alpha, x)
    \end{equation} 
    where the drifts $\eta_0^r$ and $\eta_1^r$ are defined in \eqref{eq:rekate:drift0:r} and  \eqref{eq:rekate:drift1:r}, respectively. Assume also that $A$ is diagonal, non-positive definite, and invertible. Then, $\beta_r = 1 -\alpha_r$ and $\eta^r$ can be expressed as:
    \begin{equation}
        \label{eq:eta_r}
        \eta^r(\alpha, x) = \alpha^{-1}\alpha_r\eta(\alpha_r, x_r) + \alpha^{-1}(x - x^r),
    \end{equation}
    where $\eta(\alpha, x)$ is the \emph{drift} of the prior defined in \eqref{eq:eta:app}, and $\alpha_r$ and $x_r$ are given by
    \begin{align}
        \label{eq:phi4:alphar_post_quad:app}
        \alpha_r &= \alpha\frac{ \sqrt{1 - 2\alpha + \alpha^2(1 - A)} -\alpha }{1 - 2\alpha - \alpha^2A},\\
        \label{eq:phi4:phi_r_quad:app}
        x_r &= \frac{\alpha_r^2(1-\alpha)}{\alpha^2 (1-\alpha_r)} \, x + \frac{\alpha_r^2}{(1-\alpha)}\, b,
    \end{align}
\end{restatable}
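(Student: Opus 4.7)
The plan is to apply Proposition~\ref{thm:2eta} under the diagonal ansatz that $\alpha_r$ is diagonal and $\beta_r = 1 - \alpha_r$. This ansatz is natural because $\alpha$, $\beta = 1-\alpha$, and $A$ are all diagonal, so constraints \eqref{eq:ab}–\eqref{eq:xr} decouple entry by entry; any diagonal solution to the decoupled system is a valid solution to the full operator system.

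First, I would substitute $\beta = 1-\alpha$ and $\beta_r = 1-\alpha_r$ (diagonal) into the constraint~\eqref{eq:ab}. Since everything is diagonal, transposes are trivial and the equation reduces entrywise to
\begin{equation*}
\left(\frac{1-\alpha_r}{\alpha_r}\right)^2 \;=\; \left(\frac{1-\alpha}{\alpha}\right)^2 - A.
\end{equation*}
Taking the positive square root (which is justified by $A \preceq 0$, ensuring the right-hand side stays nonnegative, and by the requirement $\alpha_r \in (0,1)$ when $\alpha \in (0,1)$), I obtain $\alpha_r = \alpha / \bigl(\alpha + \sqrt{(1-\alpha)^2 - \alpha^2 A}\bigr)$. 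Rationalizing the denominator by multiplying top and bottom by the conjugate $\sqrt{(1-\alpha)^2 - \alpha^2 A} - \alpha$, and using $(1-\alpha)^2 - \alpha^2 = 1 - 2\alpha$, recovers the stated expression~\eqref{eq:phi4:alphar_post_quad:app}. Plugging this $\alpha_r$ and $\beta_r = 1-\alpha_r$ into~\eqref{eq:xr} and collecting the coefficients of $x$ and $b$ gives~\eqref{eq:phi4:phi_r_quad:app}.

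Second, I would use Proposition~\ref{thm:2eta} to express the two posterior drifts $\eta^r_0$ and $\eta^r_1$ in terms of $\eta_0$ and $\eta_1$ evaluated at $(\alpha_r, 1-\alpha_r, x_r)$, and then apply Lemma~\ref{lem:relation} to this ``prior'' argument to substitute
\begin{equation*}
\eta_0(\alpha_r,1-\alpha_r,x_r) = x_r + (1-\alpha_r)\eta(\alpha_r,x_r), \qquad \eta_1(\alpha_r,1-\alpha_r,x_r) = x_r - \alpha_r\eta(\alpha_r,x_r).
\end{equation*}
Finally, to obtain $\eta^r = \eta^r_0 - \eta^r_1$ as in~\eqref{eq:etar:app}, I would subtract these two expressions and regroup. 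The coefficient of $\eta(\alpha_r,x_r)$ becomes $\alpha^{-1}(1-\alpha)\alpha_r + \alpha_r = \alpha^{-1}\alpha_r$, while the terms involving $x_r$ collapse via $(1-\alpha)(\alpha_r - 1)/[\alpha(1-\alpha_r)] - 1 = -\alpha^{-1}$, producing $\alpha^{-1}\alpha_r\eta(\alpha_r,x_r) + \alpha^{-1}(x - x_r)$, which is~\eqref{eq:eta_r}.

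The main obstacle I anticipate is the bookkeeping in the final step: several messy fractions involving $(1-\alpha)/(1-\alpha_r)$ must telescope down to the clean form $\alpha^{-1}(x - x_r)$. This cancellation is not manifest term by term but results from the precise way $\alpha_r$ was chosen to satisfy~\eqref{eq:ab}; a secondary difficulty is making sure the positivity branch of the square root in step~1 is consistent with $\alpha_r$ remaining diagonal and invertible on the range of interest, which uses the assumption that $A$ is negative definite.
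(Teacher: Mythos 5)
Your proposal is correct and follows essentially the same route as the paper: express $\eta_0^r,\eta_1^r$ via Proposition~\ref{thm:2eta}, rewrite the prior drifts at $(\alpha_r,1-\alpha_r,x_r)$ using Lemma~\ref{lem:relation}, and subtract, with the same cancellations yielding $\alpha^{-1}\alpha_r\eta(\alpha_r,x_r)+\alpha^{-1}(x-x_r)$; you are in fact slightly more complete, since the paper's proof does not rederive $\alpha_r$ and $x_r$ from \eqref{eq:ab} and \eqref{eq:xr}. One small point: the direct diagonal substitution into \eqref{eq:xr} gives the $b$-coefficient $\alpha_r^2(1-\alpha_r)^{-1}$ rather than the $\alpha_r^2(1-\alpha)^{-1}$ appearing in \eqref{eq:phi4:phi_r_quad:app}, which is evidently a typo in the stated proposition (harmless in the paper's applications, where either $b=0$ or $\alpha_r=\alpha$), so your claim that the substitution ``gives'' \eqref{eq:phi4:phi_r_quad:app} should be read with that correction.
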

\end{pinkbox}
\begin{proof}[Proof of the Proposition~\ref{thm:eta_r}]
    Using \eqref{eq:eta_0_alt}, \eqref{eq:eta_1_alt} in \eqref{eq:rekate:drift0:r} and \eqref{eq:rekate:drift1:r}, one obtains: 
    \begin{align*}
    \eta_0^r(\alpha, \beta, x) &= \alpha^{-1} \beta \beta_r^{-1} \alpha_r ((1 - \alpha_r)\eta(\alpha_r, \beta_r, x_r) + x_r) + \alpha^{-1} (x - \beta \beta_r^{-1} x_r),\\
    \eta_1^r(\alpha, \beta, x) &= x_r - \alpha_r \eta(\alpha_r, \beta_r, x_r).
    \end{align*}
    Then, take the difference, and regrouping terms together eventually yields:
\begin{align*}
    \eta_0^r(\alpha, \beta, x) - \eta_1^r(\alpha, \beta, x) &= \underbrace{(\alpha^{-1}\beta x_r + x_r)}_{=\alpha^{-1}\alpha_r}\eta(\alpha_r, \beta_r, x_r) + \underbrace{\alpha^{-1}\beta\beta_r^{-1}(\alpha_r - 1)x_r}_{=-\alpha^{-1}\beta x_r} - x_r + \alpha^{-1}x\\
    &= \alpha^{-1}\alpha_r\eta(\alpha_r, \beta_r, x_r) + \alpha^{-1}(x - x_r). \label{eq:eta_r}
\end{align*}
\end{proof}
\paragraph{The linear case} Assume for now that $A = 0$ and $b = h$. Note that one recovers the same case that in \eqref{eq:energy:phi4:r}, that is, one applies a uniform magnetic field of magnitude $h$ over the whole lattice. What follows is simply a corollary of \ref{thm:eta_r}.
\begin{restatable}[]{proposition}{posterior_linear_case}
    \label{thm:post_linear}
    Assume $E_r = E - (\phi, h)$. Then, $\alpha_r = \alpha$, $\beta_r = \beta$, and 
    \begin{equation}
        \label{eq:phi4:phi_r_linear}
        \phi_r = \phi + \alpha \alpha^T\beta^{-T}h.
    \end{equation}
    Also, the posterior \emph{drift} $\eta^r$ writes:
    \begin{equation}
        \label{eq:phi4:eta_r_linear}
        \eta^r(\alpha, \beta, \phi) = \eta(\alpha, \beta, \phi_r) - \alpha^T\beta^{-T}h.
    \end{equation}
\end{restatable}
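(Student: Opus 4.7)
The plan is to recognize Proposition~\ref{thm:post_linear} as the specialization of Proposition~\ref{thm:eta_r} (which itself rests on Proposition~\ref{thm:2eta}) to the case of a purely linear reward, i.e. $A=0$ and $b=h$ in~\eqref{eq:quad:r}. Once this identification is made, the proof reduces to solving the matching equations \eqref{eq:ab} and \eqref{eq:xr} in closed form and substituting into the general drift formula \eqref{eq:eta_r}.

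First I would attack \eqref{eq:ab}. With $A=0$, this equation becomes
\begin{equation*}
\beta_r^T \alpha_r^{-T}\alpha_r^{-1}\beta_r \;=\; \beta^T\alpha^{-T}\alpha^{-1}\beta,
\end{equation*}
which is satisfied by the obvious choice $(\alpha_r,\beta_r)=(\alpha,\beta)$; since Proposition~\ref{thm:2eta} only requires the existence of a solution, this is all that is needed. Next I would plug $\alpha_r=\alpha$, $\beta_r=\beta$ and $b=h$ into~\eqref{eq:xr} and use the trivial cancellation $\alpha\alpha^T\alpha^{-T}\alpha^{-1}=\mathrm{Id}$ to obtain
\begin{equation*}
x_r \;=\; \alpha\alpha^T\beta^{-T}\bigl(\beta^T\alpha^{-T}\alpha^{-1}x + h\bigr) \;=\; x + \alpha\alpha^T\beta^{-T} h,
\end{equation*}
which is exactly~\eqref{eq:phi4:phi_r_linear} (with $x=\phi$).

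Finally, I would substitute these data into the drift relation \eqref{eq:eta_r} from Proposition~\ref{thm:eta_r}. Since $\alpha_r=\alpha$, the prefactor $\alpha^{-1}\alpha_r$ collapses to the identity, leaving
\begin{equation*}
\eta^r(\alpha,\phi) \;=\; \eta(\alpha,\phi_r) + \alpha^{-1}(\phi-\phi_r) \;=\; \eta(\alpha,\phi_r) - \alpha^T\beta^{-T} h,
\end{equation*}
using the expression for $\phi_r$ derived above; this establishes~\eqref{eq:phi4:eta_r_linear}. The only step that requires any care, and thus the closest thing to a ``hard part,'' is keeping track of transposes in the matrix manipulations of \eqref{eq:xr} to get the clean cancellation; everything else is a direct specialization of already-established identities, which is why the statement is advertised as a corollary rather than an independent proposition.
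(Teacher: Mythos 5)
Your proposal is correct and follows essentially the same route as the paper's own (very terse) proof: specialize to $A=0$ so that \eqref{eq:ab} is solved by $(\alpha_r,\beta_r)=(\alpha,\beta)$, then read off $\phi_r$ from \eqref{eq:xr} and the drift formula from \eqref{eq:eta_r}. Your version simply spells out the transpose cancellations that the paper leaves implicit, and is if anything slightly more careful in noting that only the \emph{existence} of a solution to \eqref{eq:ab} is required.
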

\begin{proof}[Proof of Proposition~\ref{thm:post_linear}]
    Since $A = 0$, \eqref{eq:ab} directly implies $\alpha_r = \alpha$ and $\beta_r = \beta$. Consequently, \eqref{eq:phi4:phi_r_linear}  follow from \eqref{eq:xr} and \eqref{eq:phi4:eta_r_linear} from \eqref{eq:eta_r}.
\end{proof}
In summary, a simple shift proportional to $h$ appears in the posterior field $\phi_r$. It clearly tends to favor the alignment with the magnetic field, which obeys common sense.

\paragraph{The quadratic case} 
\begin{itemize}
    \item Assume that $b = 0$ and $A = -k^2 \text{Id}$.
\end{itemize}
In a similar fashion to the linear case, one derives analytical expressions for the quantities of interest.

\begin{restatable}[]{proposition}{posterior_quadratic_case}
    \label{thm:post_quad}
    Assume $E_r = E - (\phi, A\phi) = E + k^2\sum\limits_a \phi(a)^2$, $\beta_r = 1 - \alpha_r$ and $\beta = 1 - \alpha$. Then
    \begin{equation}
        \label{eq:phi4:alphar_post_quad}
        \alpha_r = \alpha\frac{-\alpha + \sqrt{1 - 2\alpha + \alpha^2(1 + k^2)}}{1 - 2\alpha + \alpha^2k^2},
    \end{equation}
    \begin{equation}
        \label{eq:phi4:phi_r_quad}
        \phi_r = \frac{(-\alpha + \sqrt{1 - 2\alpha + \alpha^2(1 + k^2)})(1 - \alpha)}{\sqrt{1 - 2\alpha + \alpha^2(1 + k^2)}(1 - 2\alpha + \alpha^2 k^2)}\phi,
    \end{equation}
    and
    \begin{align}
        \label{eq:phi4:eta_r_quad}
            \eta^r(\alpha, \beta, \phi) &= \frac{-\alpha + \sqrt{1 - 2\alpha + \alpha^2(1 + k^2)}}{1 - 2\alpha + \alpha^2k^2}\eta(\alpha_r, \beta_r, \phi_r) \\
            &+ \alpha^{-1}\phi\bigg(1 - \frac{(-\alpha + \sqrt{1 - 2\alpha + \alpha^2(1 + k^2)})(1 - \alpha)}{\sqrt{1 - 2\alpha + \alpha^2(1 + k^2)}(1 - 2\alpha + \alpha^2 k^2)}\bigg).
    \end{align}
\end{restatable}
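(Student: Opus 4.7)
The plan is to treat this proposition as a direct corollary of Proposition~\ref{thm:eta_r}, specialized to the quadratic reward $r(\phi) = \tfrac12 \langle \phi, A\phi\rangle + \langle b,\phi\rangle$ with the scalar choice $A = -k^2\,\mathrm{Id}$ and $b = 0$. Because $A$, $\alpha$ and $\beta = 1-\alpha$ are all diagonal and proportional to the identity (hence mutually commuting), every operator product that appears in equations~\eqref{eq:ab}, \eqref{eq:xr}, and \eqref{eq:eta_r} collapses to pointwise scalar arithmetic. This lets us verify the ansatz $\beta_r = 1-\alpha_r$ and solve for $\alpha_r$ in closed form.

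First, I would substitute the ansatz $\beta_r = 1-\alpha_r$ into \eqref{eq:ab}. With the commutativity above, the equation reduces pointwise to $(1-\alpha_r)^2/\alpha_r^2 = (1-\alpha)^2/\alpha^2 + k^2$, i.e.\ a scalar quadratic in $\alpha_r$. Picking the positive root (so that $\alpha_r \in (0,1)$ for $\alpha \in (0,1)$) and rationalizing, one arrives at \eqref{eq:phi4:alphar_post_quad}, which is exactly what the general formula \eqref{eq:phi4:alphar_post_quad:app} of Proposition~\ref{thm:eta_r} returns upon the replacement $A \mapsto -k^2$. Write $D := \sqrt{1-2\alpha+\alpha^2(1+k^2)}$ and $N := 1-2\alpha+\alpha^2 k^2$, so that $\alpha_r = \alpha(D-\alpha)/N$.

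Second, I would plug this $\alpha_r$ and $b=0$ into \eqref{eq:phi4:phi_r_quad:app} to obtain $\phi_r = \alpha_r^2(1-\alpha)/\bigl(\alpha^2(1-\alpha_r)\bigr)\,\phi$. The only non-routine algebra is to rewrite $1-\alpha_r$. Using $N - \alpha D + \alpha^2 = (1-2\alpha+\alpha^2(1+k^2)) - \alpha D = D^2 - \alpha D$, one finds the identity $1-\alpha_r = D(D-\alpha)/N$. Substituting back and cancelling $(D-\alpha)$ from the numerator delivers \eqref{eq:phi4:phi_r_quad}.

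Finally, I would insert the expressions for $\alpha_r$ and $\phi_r$ into the drift relation \eqref{eq:eta_r}, namely $\eta^r(\alpha,\phi) = \alpha^{-1}\alpha_r\,\eta(\alpha_r,\phi_r) + \alpha^{-1}(\phi - \phi_r)$. The coefficient $\alpha^{-1}\alpha_r$ becomes $(D-\alpha)/N$, which is the prefactor in front of $\eta(\alpha_r,\beta_r,\phi_r)$ in \eqref{eq:phi4:eta_r_quad}, while $\alpha^{-1}(\phi-\phi_r) = \alpha^{-1}\phi\bigl(1 - (D-\alpha)(1-\alpha)/(DN)\bigr)$ reproduces the second line of \eqref{eq:phi4:eta_r_quad}. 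The main obstacle is not conceptual but bookkeeping: keeping track of the algebraic identity $1-\alpha_r = D(D-\alpha)/N$ that is needed both for $\phi_r$ and for the drift, and verifying that the positive-root choice in solving the scalar quadratic is consistent with $\alpha_r$ being invertible on the range of $\alpha$ used in practice.
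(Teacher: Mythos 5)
Your proposal is correct and follows essentially the same route as the paper's proof: both reduce \eqref{eq:ab} to the scalar quadratic $(1-\alpha_r)^2/\alpha_r^2 = (1-\alpha)^2/\alpha^2 + k^2$, select the positive root, use the identity $1-\alpha_r = D(D-\alpha)/N$ to simplify $\phi_r$ from \eqref{eq:xr}, and substitute into the drift relation \eqref{eq:eta_r}. The only cosmetic difference is that you invoke the already-specialized formulas of Proposition~\ref{thm:eta_r} where the paper re-derives them from \eqref{eq:ab} and \eqref{eq:xr}; the algebra is identical.
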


\begin{proof}[Proof of the Proposition~\ref{thm:post_quad}]
    Given the assumptions, \eqref{eq:ab} yields:
\begin{equation*}
    (\text{Id} - \alpha_r)^T\alpha_r^{-T}\alpha_r^{-1}(\text{Id} - \alpha_r) = (\text{Id} - \alpha)^T\alpha^{-T}\alpha^{-1}(\text{Id} - \alpha) - k^2\text{Id}.
\end{equation*}
Observing that $\alpha$ and $A$ are diagonals, $\alpha_r$ is also diagonal. Furthermore, assuming that $\alpha_r$ is proportional to the identity, the above reduces to the scalar equation (keeping the same notation for conciseness):
\begin{equation*}
    \frac{(1 - \alpha_r)^2}{\alpha_r^2} = \frac{(1 - \alpha)^2}{\alpha^2} + k^2
\end{equation*}

After a few elementary manipulations, one arrives at:
\begin{equation*}
    (1 - \alpha_r)^2\alpha^2 = \alpha_r^2(1 - \alpha)^2 +k^2\alpha_r^2\alpha^2.
\end{equation*}

This is a quadratic equation that admits two solutions. Only one is positive, and writes as: 
\begin{equation}
    \label{eq:alpha_r_quad}
    \alpha_r = \frac{-\alpha^2 + \alpha\sqrt{1 - 2\alpha + \alpha^2(1 + k^2)}}{1 - 2\alpha + \alpha^2 k^2} = \alpha\frac{-\alpha + \sqrt{1 - 2\alpha + \alpha^2(1 + k^2)}}{1 - 2\alpha + \alpha^2k^2}.
\end{equation}

It is quite easy to check that the discriminant is always positive, so it does not pose any problem. Also, if $k \geq 0$ and $\alpha \in [0, 1]$, then $\alpha_r \in [0, 1]$. This property is necessary, since $ \alpha \mapsto \eta(\cdot, \alpha)$ has been trained in the hypercube $[0, 1]^d$.

After elementary simplifications and recalling that $\beta_r = 1 - \alpha_r$ and \eqref{eq:xr}, one has:

$$
\phi_r = \frac{\alpha_r^2}{\alpha^2}\frac{1 - \alpha}{1 - \alpha_r} = \bigg(\frac{-\alpha + \sqrt{1 - 2\alpha + \alpha^2(1 + k^2)}}{1 - 2\alpha + \alpha^2k^2}\bigg)^2\frac{1 - \alpha}{1 - \alpha_r}.
$$

Since $1 - \alpha_r = \frac{1 - 2\alpha + \alpha^2k^2 + \alpha^2 - \alpha\sqrt{1 - 2\alpha + \alpha^2(1 + k^2)}}{1 - 2\alpha + \alpha^2k^2} = \frac{1 - 2\alpha + \alpha^2(1 + k^2) - \alpha\sqrt{1 - 2\alpha + \alpha^2(1 + k^2)}}{1 - 2\alpha + \alpha^2k^2}$, it yields:

\begin{equation}
    \phi_r = \frac{(-\alpha + \sqrt{1 - 2\alpha + \alpha^2(1 + k^2)})^2}{1 - 2\alpha + \alpha^2k^2}\frac{1 - \alpha}{1 - 2\alpha + \alpha^2(1 + k^2) - \alpha\sqrt{1 - 2\alpha + \alpha^2(1 + k^2)}},
\end{equation}
then factorizing by $\sqrt{1 - 2\alpha + \alpha^2(1 + k^2)}$ eventually gives \eqref{eq:phi4:phi_r_quad}.

Eventually, after replacing \eqref{eq:alpha_r_quad} and \eqref{eq:phi4:phi_r_quad} into \eqref{eq:eta_r}, \eqref{eq:phi4:eta_r_quad} holds.
\end{proof}

\begin{itemize}
    \item Assume that $b = 0$ and $A = k^2 \text{Id}$.
\end{itemize}

In this case, the quadratic equation is:
\begin{equation*}
    \frac{(1 - \alpha_r)^2}{\alpha_r^2} = \frac{(1 - \alpha)^2}{\alpha^2} - k^2,
\end{equation*}
or otherwise stated:
\begin{equation*}
    (1 - \alpha_r)^2\alpha^2 - (1 - \alpha)^2\alpha_r^2 + \alpha_r^2 \alpha^2k^2 = 0.
\end{equation*}
The discriminant of this polynomial is $\Delta = \alpha^2(1 - k^2) - 2\alpha + 1$. Assuming it strictly positive, among the two solutions, only one is positive:
\begin{equation*}
    \alpha_r = \alpha \frac{\sqrt{1 - 2\alpha + \alpha^2(1 - k^2)} - \alpha}{1 - 2\alpha - (\alpha k)^2}.
\end{equation*}
The polynomial inside the square root is positive if and only if $\alpha\notin [\frac{1}{1 + k}, \frac{1}{1 - k}]$. To see that, see there exists always two real roots, since the discriminants is $4k^2 > 0$. Those roots are $\frac{1}{1 + k} < 1$ and $\frac{1}{1 - k} > 1$. Since $\alpha_r \in [0, 1]$ must be respected for all $\alpha \in [0, 1]$, only $k < 1$ can be considered with our method.
Consequently, sampling using stochastic interpolants from $\alpha = 1$ to $\alpha = 0$ appears impossible with this method.

\subsection{Details about the maze experiment}
\label{app:maze}

We use the Hadamard interpolant~\eqref{eq:stoch:interp:hada} and estimate the drift $\eta(\alpha,x)$ defined in~\eqref{eq:eta} by approximating it with a U-Net neural network~\cite{ho2020denoising}, trained with an Adam optimizer~\cite{kingma2017adammethodstochasticoptimization}. The U-Net comprises 4 stages with $48, 80, 160,$ and $256$ channels respectively for the encoding flow. The decoder has the same architecture as the encoder but in reverse order, with added residual connections~\cite{DBLP:journals/corr/RonnebergerFB15}. Each stage consists of $2$ residual blocks, with the first concatenated with a self-attention block. The input vector has shape $d \times 2$, where row $i$ contains the $x$ and $y$ coordinates of the $i$-th point in the trajectory.

In contrast to conventional U-Net architectures, we perform interpolation and max pooling operations independently on each coordinate column to increase and reduce dimensions only along the trajectory length axis. The convolution kernel size is $5 \times 2$, processing each point's coordinates together with those of its two temporal predecessors and successors in the sequence. We add the necessary padding to maintain identical input and output dimensions, which amounts to padding by two rows at the top and bottom of the input vector.

Given a pair of randomly chosen points in the maze, we must determine where to constrain these points along the generated trajectory. If the constraint points are placed too far apart in the sequence (large index difference), the resulting path will likely not be the shortest route; conversely, if placed too close together (small index difference), the generated path has an increased chance of cutting through forbidden regions, making it inadmissible. To address this trade-off, we adopt the following heuristic. We fix the starting point at the beginning of the path (index $i=1$) and employ a progressive search for the target point placement using the candidate indices $[5, 10, 20, 30, 40, 45, 50]$. We first generate a path with the target point constrained at index $5$ (creating a short trajectory). If this path intersects forbidden regions, we increase the target index to $10$ (allowing a longer path), and continue this process until we generate a valid path that successfully avoids all obstacles.

\section{Additional experimental results}
\label{sec:add:exp}
Here we provide additional infilling image results, given in Figure \ref{fig:add_imgs}.
\begin{figure}
  \centering
    \includegraphics[width=0.9\linewidth]{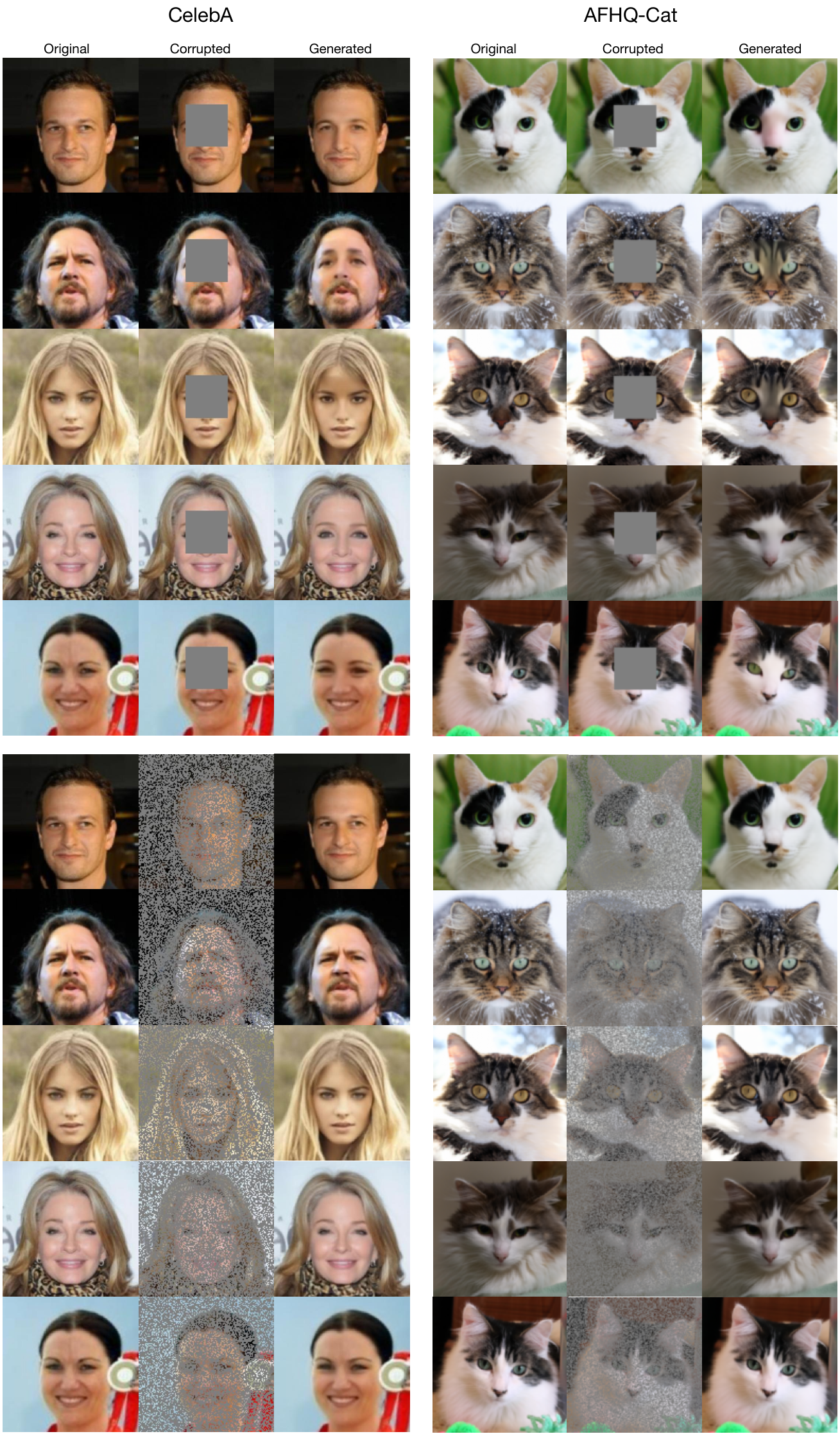}
\caption{Additional images demonstrating the inpainting task: Random inpainting is shown in the top panels, while block inpainting is displayed in the bottom panels. The left panels depict images from the AFHQ dataset, and the right panels show images from the CelebA dataset.}
\label{fig:add_imgs}
\end{figure}

\end{document}